\documentclass[letterpaper]{article} 
\usepackage{aaai2027} 

\usepackage[hyphens]{url} 
\usepackage{graphicx} 
\usepackage{natbib} 
\usepackage{caption} 
\usepackage{amsmath}
\usepackage{amssymb}
\usepackage{amsthm}
\usepackage{mathtools}
\usepackage{bm}

\usepackage{algorithm}
\usepackage{algorithmic}

\usepackage{placeins}
\usepackage{tikz}

\usepackage{booktabs}
\usepackage{multirow}
\usepackage{array}

\theoremstyle{definition}
\newtheorem*{aaltdefinitionone}{Definition 1}

\theoremstyle{plain}
\newtheorem*{aaltpropositionone}{Proposition 1}
\newtheorem*{aaltpropositiontwo}{Proposition 2}

\theoremstyle{definition}
\newtheorem*{aaltdefinitionAone}{Definition A.1}
\newtheorem*{aaltdefinitionAtwo}{Definition A.2}
\newtheorem*{aaltdefinitionAthree}{Definition A.3}
\newtheorem*{aaltdefinitionAfour}{Definition A.4}

\theoremstyle{plain}
\newtheorem*{aaltlemmaAone}{Lemma A.1}
\newtheorem*{aaltlemmaAtwo}{Lemma A.2}

\theoremstyle{remark}
\newtheorem*{aaltremarkAone}{Remark A.1}
\newtheorem*{aaltremarkAtwo}{Remark A.2}

\makeatletter
\newcommand{\manualtheoremlabel}[2]{%
    \@ifundefined{phantomsection}{}{\phantomsection}%
    \begingroup
    \def\@currentlabel{#1}%
    \label{#2}%
    \endgroup
}
\makeatother

\DeclareMathOperator*{\argmax}{arg\,max}

\DeclareMathOperator{\E}{\mathbb{E}}
\DeclareMathOperator{\Prb}{\mathbb{P}}

\definecolor{hubblue}{HTML}{C3DAF4}
\definecolor{goalred}{HTML}{FF9999}
\definecolor{startgreen}{HTML}{99FF99}

\DeclareRobustCommand{\circleletter}[2]{%
  \tikz[baseline=(letter.base)]{%
    \node[
      circle,
      fill=#1,
      draw=black,
      line width=0.25pt,
      text=black,
      font=\scriptsize,
      inner sep=0.5pt
    ] (letter) {\ensuremath{#2}};%
  }%
}

\title{Missing Bridges: Composition-Aware Active Imitation Learning}

\author{
    Maxwell J. Jacobson,
    Ahmed H Qureshi,
    Yexiang Xue
}

\affiliations{
    Department of Computer Science, Purdue University\\
    West Lafayette, Indiana, USA\\
    jacobs57@purdue.edu, qureshi7@purdue.edu, yexiang@purdue.edu
}

\begin{document}

\maketitle

\begin{abstract}
Active imitation learning reduces expert effort by allowing a learner to request the demonstrations it needs. Existing methods typically select these requests for their expected information gain about the expert policy. In structured multi-task domains, however, the number of start--goal tasks may grow combinatorially despite their solutions sharing reusable behavior. This makes composable behaviors especially valuable, since a single demonstration may help solve many tasks at once. Prior methods do not explicitly account for this value when selecting which demonstration to request. We introduce Adaptive Agents via Latent Topologies (AALT), which requests demonstrations that maximize expected gains in start--goal connectivity. We further show that this objective is formally tied to information gain about task reachability. AALT organizes existing demonstrations into a topology of latent hub states connected by learned behaviors, identifies high-value \textit{bridge} demonstrations that are likely to enable many tasks at once, and grounds each to an expert query. At inference, it plans through the resulting topology and conditions a diffusion policy on each successive hub transition. In a simulated UR5e robot ordered-retrieval domain with 72 tasks, AALT improved from 42/72 to 72/72 ($100\%$) successful tasks consistently using only 3 demonstrations totaling 5 transitions beyond the initial dataset. After 20 demonstrations, the strongest baseline averaged $88.6\%$ success using 98 transitions.
\end{abstract}

\section{Introduction}
\label{sec:introduction}

\begin{figure*}[t]
    \centering
    \includegraphics[width=0.9\linewidth]{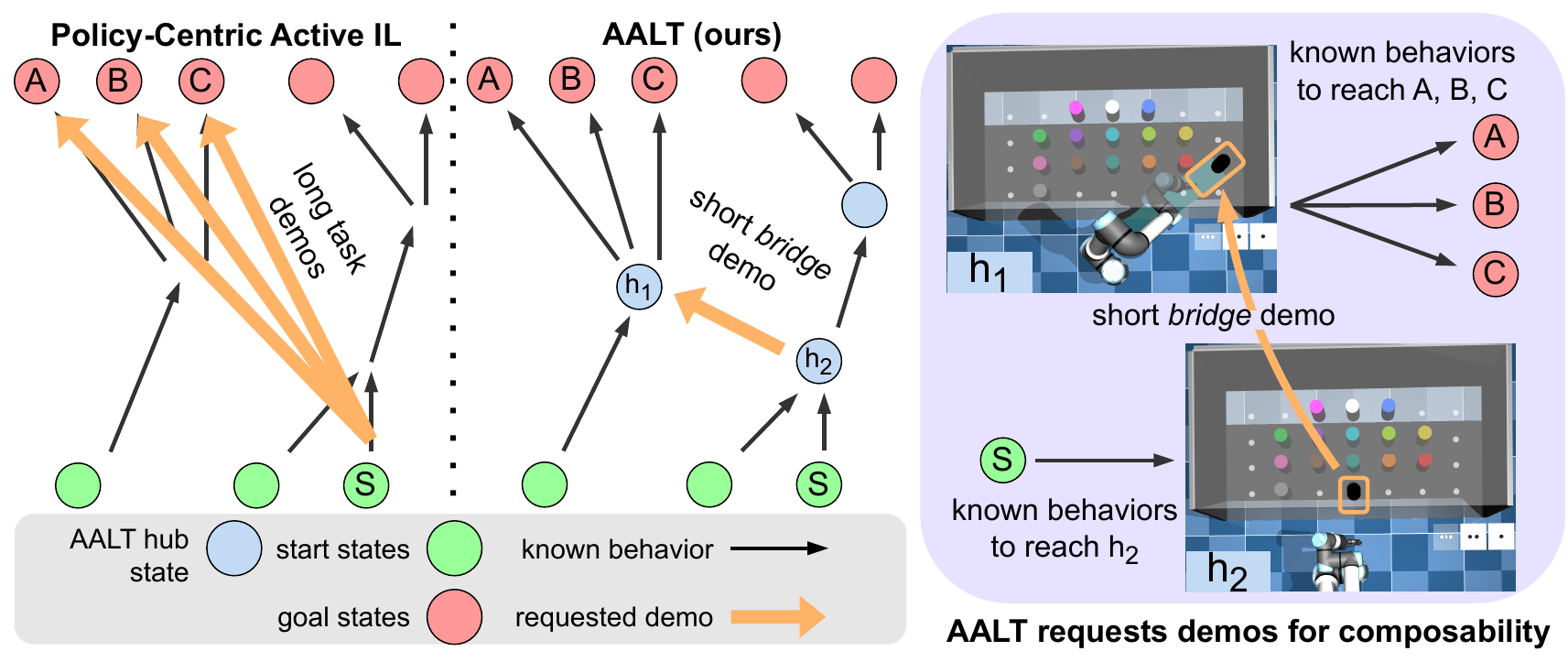}
    \caption{\textbf{AALT requests demonstrations for composability.} An agent begins with learned behaviors (black arrows) covering only some tasks between start states (\circleletter{startgreen}{S}) and goal states (\circleletter{goalred}{A}, \circleletter{goalred}{B}, \circleletter{goalred}{C}), and may request additional expert demonstrations (orange arrows). \textbf{(Left)} Standard policy-centric active IL selects requests for their expected information about the expert policy. Because it does not explicitly value composability, it may request separate full-task demonstrations from \circleletter{startgreen}{S} to \circleletter{goalred}{A}, \circleletter{goalred}{B}, and \circleletter{goalred}{C}, each of which may help the agent understand only one task. \textbf{(Center)} AALT instead organizes demonstrated behaviors into a topology connected through reusable latent \textit{hub states} (blue), then requests short \textit{bridge demonstrations} that maximize expected start--goal connectivity. Here, known behaviors reach \circleletter{hubblue}{h_2}, while other known behaviors lead from \circleletter{hubblue}{h_1} to \circleletter{goalred}{A}, \circleletter{goalred}{B}, and \circleletter{goalred}{C}. Requesting the bridge \circleletter{hubblue}{h_2}\(\rightarrow\)\circleletter{hubblue}{h_1} creates routes to all three goals with one short demonstration. During execution, the topology specifies a route through these hubs, while a learned policy generates actions from the current observations for each successive hub transition. \textbf{(Right)} In the simulated retrieval task, the bridge moves the black canister to transform \circleletter{hubblue}{h_2} into \circleletter{hubblue}{h_1}, from which the policy can pursue all three goals. Video of the simulation is provided in the supplementary material.}
    \label{fig:main}
\end{figure*}

We address goal-based active imitation learning over large, structured sets of start--goal tasks. The learner begins with an incomplete demonstration dataset and may request additional expert demonstrations under a limited budget. Each start--goal task pairs an initial state with a requested goal, and the number of tasks can grow combinatorially as more start states and goals are introduced. 
Consider a robotic retrieval task in which a robot arm fetches part canisters from a shelf to fulfill work orders (see Fig.~\ref{fig:main}). Different shelf organizations and misplaced canisters produce different starts, while each work order defines a goal by specifying which canisters to retrieve and in what order. Combining these starts and goals yields many tasks, yet historical demonstrations may cover only a few, such as shelves already arranged for expected work orders. Demonstrating every remaining pair would therefore require effort that grows with the task space.

Existing active imitation learning methods often select demonstrations according to how much they are expected to improve learning of the expert policy. We refer to this broad family as \textit{policy-centric active IL}. These methods provide a strong means to direct limited expert effort toward the most informative behavior. Active Multi-task Fine-tuning (AMF) is a representative example, selecting complete start--goal tasks by their expected information gain about the expert policy \cite{bagatella2025amf}.

Existing active imitation learning objectives do not explicitly value the compositional structure learned from a demonstration. This distinction matters when many start--goal tasks share portions of their solutions. A short demonstration may connect two previously demonstrated behaviors and thereby enable many unsupported start--goal tasks with just one \textit{``bridge''}, even if it provides relatively little new information about the expert policy. In our robot example, a policy-centric method might prioritize an uncertain maneuver useful for only a few work orders over one that connects many starts to already learned retrieval behaviors. The expert may therefore be asked for many additional demonstrations whose solutions substantially overlap.

We introduce Adaptive Agents via Latent Topologies (AALT), a composition-aware active imitation learning method that selects demonstrations according to their expected increase in start-to-goal connectivity. This objective is formally connected to information gain about task reachability rather than the expert policy. We prove that this information gain factorizes into AALT's connectivity gain and uncertainty about bridge acquisition when acquired behaviors are treated as perfectly reliable, with a corresponding lower-bound relationship when execution may fail. 

AALT encodes demonstrated states into a learned latent space and identifies \textit{hub states} where demonstrated trajectories converge or diverge. Demonstrated behavior between hub states forms directed edges in a latent topology. The topology represents which behaviors are currently available, how they can be composed, and which missing bridges prevent additional start--goal tasks from being supported. In the robotic retrieval example, a hub may correspond to a shelf state reached after retrieving the first canister shared by three work orders, with demonstrated continuations for each order's remaining canisters. A single bridge from another reachable shelf state to this hub can therefore unlock all three work orders.

AALT estimates the execution reliability of each demonstrated edge and measures support for a start--goal task using the reliability of its best path through the topology. It then evaluates absent edges by how much adding each one would increase expected start--goal connectivity over the target task distribution.
This objective is essentially a surrogate for reachability information gain -- or how much information a bridge query reveals about which target tasks will be reachable after the query.
The highest-value edge is requested as a bridge demonstration. In the robotic retrieval example, this might be requesting a demonstration from a reliably reachable shelf configuration to one with strongly demonstrated continuations to goal states. After a request, AALT adds the demonstration to the dataset, updates the shared policy and edge reliabilities, and recomputes the value of the remaining bridge candidates. Acquisition stops when the expert budget is exhausted or no bridge provides enough expected gain.

At inference, AALT matches the current state to the latent topology and plans a reliable path to a hub state satisfying the requested goal. A single diffusion policy \cite{chi2025visuomotor} executes this path one edge at a time, conditioned on the current and next hub states. The topology supplies compositional guidance, while the diffusion policy learns the low-level behavior across all edges.

We evaluate AALT in a simulated UR5e retrieval environment containing 72 start--goal tasks, where a robot must retrieve three ordered part canisters from a shelf without disturbing obstructing objects. AALT improves from 42/72 tasks to $100.0\pm0.0\%$ using three demonstrations totaling five transitions, versus $88.6\pm9.9\%$ for the strongest baseline using 20 demonstrations totaling $98.0\pm9.8$ transitions. Our baselines include policy-centric AMF~\cite{bagatella2025amf}, which requests complete task demonstrations ($46.9\pm2.7\%$); a variant given the same topology and bridge candidates as AALT ($82.5\pm4.0\%$); and uniform random selection from those bridges ($88.6\pm9.9\%$). AALT's three bridges enable 12, 10, and 8 additional tasks and are reused by 18, 10, and 8 final task routes. Failure analysis further shows that topology-enabled AMF more often fails because no route was acquired ($8.4\pm5.9$ tasks) than because an available route fails during execution ($4.2\pm4.3$), consistent with policy information gain leaving important connections unrequested.


\section{Problem Definition}
\label{sec:problem-definition}

We consider goal-based active imitation learning over large, structured families of start--goal tasks. The agent must reach different requested goals from many possible initial states, but begins with demonstrations for only some of these combinations. It may then ask an expert to demonstrate how to move between selected source and destination states, with the aim of using a limited number of queries to improve performance across the full task family.  Let $\mathcal{S}$, $\mathcal{A}$, and $\mathcal{G}$ denote the state, action, and goal spaces, and let $\mathcal{S}_0\subseteq\mathcal{S}$ denote the possible initial states. Each task is a pair $(s_0,g)\in\mathcal{S}_0\times\mathcal{G}$ drawn from a target distribution $\rho$. The agent receives the state as input, and must output actions which change the state such that it eventually reaches a state compatible with the goal (call this set $\mathcal{S}_g$).

The agent initially receives an incomplete expert dataset $\mathcal{D}_0=\{\tau_i\}_{i=1}^{N}$, where each trajectory contains states, actions, its intended goal, and a success indicator. The agent may use this dataset in a pre-training phase. This is followed by the acquisition phase, where the agent may create a query and receive an expert demonstration to add to the dataset for more training. Given a budget of $B$ queries, the objective is to maximize expected task success under $\rho$, including for start--goal pairs absent from $\mathcal{D}_0$.

Although the task space may be large or combinatorial, its solutions need not be independent: tasks may share prefixes, suffixes, or intermediate behaviors. Demonstrating each complete task separately may therefore repeat substantial expert effort. We target settings where such compositional structure may exist but is not given to the agent.

\section{Adaptive Agents via Latent Topologies (AALT)}
\label{sec:method}

AALT seeks to efficiently expand the set of tasks an agent can solve from limited expert demonstrations. It does this by requesting missing bridge demonstrations that maximize start--goal connectivity. AALT has three components: acquisition, pre-training, and inference. Acquisition operates on a latent topology, an abstract map of reusable states and the demonstrated behaviors connecting them. Its hub states represent points where demonstrations begin, end, converge, or diverge, while its directed edges represent behaviors that move between those hubs. \textbf{AALT evaluates missing edges in this topology and requests the bridge demonstration expected to produce the greatest increase in start--goal connectivity}. During pre-training, AALT learns the topology from the initial dataset, along with a matcher for associating observations with hubs and a shared diffusion policy for executing its edges. During inference, AALT matches the current state to the topology, plans a route to a hub satisfying the requested goal, and executes the route one edge at a time using the shared policy.

In the example in Fig.~\ref{fig:main}, suppose hub $h_1$ represents a configuration in which the center canisters have been moved aside, exposing the pink, white, and blue canisters. Existing behaviors from $h_1$ can then complete several work orders, such as retrieving $pink \rightarrow white \rightarrow blue$, or $blue \rightarrow pink \rightarrow white$. However, the initial topology may contain no route to $h_1$, even though it can already reach a nearby hub $h_2$. During acquisition, AALT evaluates a bridge from $h_2$ to $h_1$ according to how many task routes it would create. If this bridge has the greatest expected gain and the expert demonstrates it successfully, AALT adds the demonstration to the topology and updates the policy. The bridge might simply move the remaining black canister out of the way. At inference, the agent follows existing edges to $h_2$, executes the acquired bridge to $h_1$, and then uses the existing outgoing behaviors to complete the requested work order. One short demonstration can make several known goal-reaching behaviors available from new starts.

\begin{figure*}[t]
    \centering
    \includegraphics[width=0.95\linewidth]{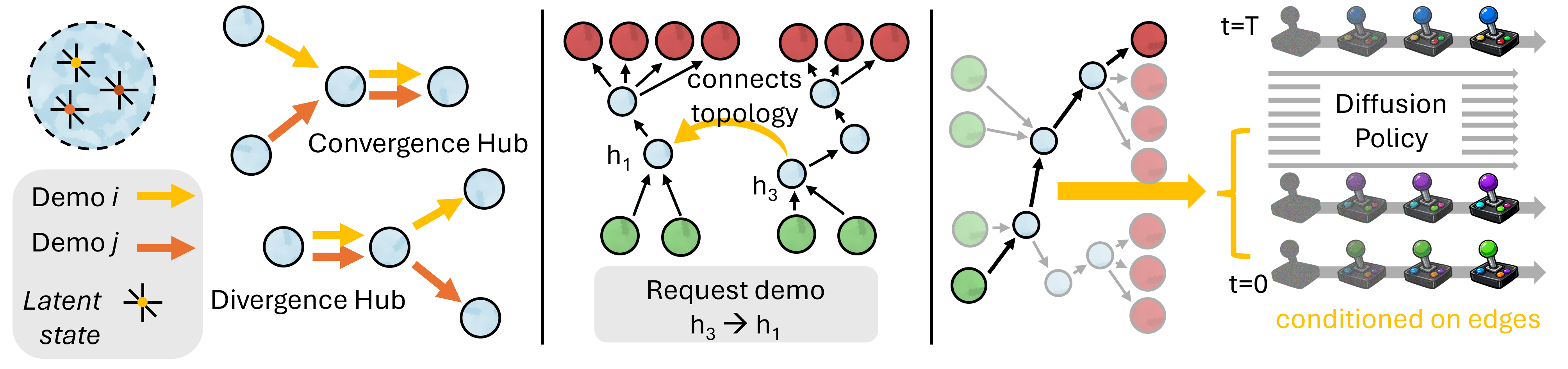}
    \caption{The AALT method. \textbf{(Left)} latent states are clustered, and may become hub states if they converge or diverge between different demos. \textbf{(Center)} hubs are linked by demos to form a topology of composable behaviors. New demonstration requests are selected to optimize start--goal connectivity. Here, $h_3 \rightarrow h_1$ would connect the right-side starts to the left side goals. \textbf{(Right)} during inference, we plan from the start state to a goal hub. A diffusion policy generates actions to reach the goal, conditioned on $(h_i, h_j)$ from each edge.}
    \label{fig:aalt_method}
\end{figure*}

\subsection{Acquisition: Selecting Expert Queries to Gain Task Connectivity}

Given a learned topology $\Gamma=(\mathcal H,\mathcal E)$, AALT selects additional demonstrations that improve connectivity between the target start and goal states. Each directed edge $e\in\mathcal E$ represents a demonstrated behavior and has an estimated reliability $r_e\in[0,1]$. Acquisition evaluates missing edges according to how much their addition would increase reliable start--goal connectivity over the target task distribution.

Once we have this topology, AALT uses it to decide which additional demonstration would be most useful. The central idea is to request a missing behavior that connects existing parts of the topology, allowing already demonstrated behaviors to be composed into solutions for additional tasks. To make this decision, AALT first estimates how reliably each existing edge can be executed, then measures how well the current topology supports the target task distribution, and finally evaluates how much each possible new edge would improve that support.

Not every demonstrated behavior can be executed equally reliably. An edge that repeatedly succeeds should contribute more confidence to a composed solution than one that often fails. AALT therefore maintains an execution-success estimate for each edge $e$. We represent uncertainty about its execution-success probability with a $\operatorname{Beta}(\alpha_e,\beta_e)$ distribution and use the posterior mean $r_e=\alpha_e/(\alpha_e+\beta_e)$ as the edge reliability. Every edge begins with a shared prior $\operatorname{Beta}(\alpha_0,\beta_0)$ and corresponding initial reliability $r_0=\alpha_0/(\alpha_0+\beta_0)$. After observing an execution outcome $y^{\mathrm{exec}}_{e,k}\in\{0,1\}$, where $y^{\mathrm{exec}}_{e,k}=1$ denotes success, AALT updates $\alpha_e\leftarrow\alpha_e+y^{\mathrm{exec}}_{e,k}$ and $\beta_e\leftarrow\beta_e+(1-y^{\mathrm{exec}}_{e,k})$. The reliability estimate therefore increases with successful executions and decreases with failures. All experiments use this soft-reliability form, reflecting that learned behaviors may succeed only some of the time. For the theoretical analysis in the Theory section, we also consider a binary-reliability version of AALT in which acquired edges are treated as perfectly reliable ($r_e\in\{0,1\}$ for every edge, including initial reliability $r_0$).

A start--goal task is supported when the topology contains a path from the hub matching its initial state to a terminal hub satisfying its goal. Because a path may compose several learned behaviors, its reliability is estimated by multiplying the reliabilities of its edges. When several paths are available, AALT uses the most reliable one. Formally,
\begin{equation}
    R_{\Gamma}(s_0,g)
    =
    \max_{\substack{h_g\in\mathcal{H}_g\\P:h_s\rightsquigarrow h_g}}
    \prod_{e\in P} r_e,
    \label{eq:task_connectivity}
\end{equation}
where $\Gamma=(\mathcal H,\mathcal E)$ is the current topology, $h_s$ is the hub matched to $s_0$, $\mathcal H_g$ is the set of terminal hubs satisfying $g$, and $P:h_s\rightsquigarrow h_g$ is a directed path between them. If no such path exists, then $R_{\Gamma}(s_0,g)=0$.

AALT summarizes the support provided by the entire topology by averaging this value over the target task distribution. Its \textit{soft connectivity} is
\begin{equation}
    C(\Gamma)
    =
    \E_{(s_0,g)\sim\rho}
    \left[
        R_{\Gamma}(s_0,g)
    \right].
    \label{eq:soft_connectivity}
\end{equation}
Thus, $C(\Gamma)$ is the expected reliability of the best available solution for a task drawn from $\rho$. Tasks with greater probability under $\rho$ contribute more strongly to the acquisition objective.

AALT next forms demonstration queries for behaviors that are currently absent from the topology. Each candidate is a directed edge between two existing hubs that are not already connected, giving $\mathcal E_{\mathrm{cand}}\subseteq(\mathcal H\times\mathcal H)\setminus\mathcal E$. These queries must be grounded in concrete states available in the demonstration dataset. The expert is therefore shown a source state associated with the first hub and a destination state associated with the second hub, rather than being asked to interpret the latent representations themselves.

To evaluate a candidate edge, AALT temporarily adds it to the topology with the initial reliability $r_0$ and measures how much the resulting topology would improve expected task connectivity. Let $\Gamma+e=(\mathcal H,\mathcal E\cup\{e\})$ denote this hypothetical topology. The value of candidate $e$ is
\begin{equation}
    \Delta(e)
    =
    C(\Gamma+e)-C(\Gamma).
    \label{eq:connectivity_gain}
\end{equation}
Because $C(\Gamma)$ averages over the complete target distribution, a short bridge can receive high value when it completes useful paths for many start--goal tasks, even when the demonstrated behavior itself is brief.

AALT requests the candidate with the greatest expected gain, $e^*=\argmax_{e\in\mathcal E_{\mathrm{cand}}}\Delta(e)$. If the expert returns a feasible trajectory connecting its grounded source and destination states, the demonstration is added to the training dataset and the corresponding edge is inserted into the topology. If no feasible trajectory is returned, no edge is added, and the candidate is removed from further consideration. After each successful query, AALT updates the learned behavior and recomputes the values of the remaining candidates using the expanded topology. Acquisition stops when the query budget $B$ is exhausted, no candidate remains, or the best remaining candidate satisfies $\Delta(e^*)<\delta$. 

This greedy rule maximizes the expected value of the next query, but need not maximize connectivity after the entire query budget. Given a model of query outcomes, exhaustive lookahead over the remaining budget could recover the optimal acquisition policy under that model, while a shorter receding-horizon lookahead or beam search could provide a more computationally practical approximation. We use one-step greedy selection in this work.

\subsection{Pre-training: Learning a Latent Topology, Policy, \& Matcher from Initial Dataset}
To construct the topology assumed above, AALT identifies reusable states where demonstrated trajectories begin, end, converge, or diverge. Here, we borrow the concepts of \textit{convergence hubs} and \textit{divergence hubs} \cite{jacobson2026zalt}. Convergence hubs are states reached by multiple distinct preceding trajectories (like a blocking canister being moved aside from any starting shelf), while divergence hubs are states from which trajectories branch into multiple distinct continuations (like a canister used in multiple work orders being accessed). Both are visualized in the left panel of Figure~\ref{fig:aalt_method}. 

A useful topology must organize states by what the agent can do from them, rather than by visual similarity alone. Two state images may look mostly similar but support different actions, while different-looking states may support the same next behavior. We therefore shape the latent space around the effects of actions. Following a common pattern in latent world models \cite{ha2018recurrent,hafner2019learning,hafner2020dream}, an encoder maps each observation to a compact state, an action-conditioned dynamics model predicts how that state changes, and a decoder preserves the observation information needed to make these predictions. Formally, $\operatorname{Enc}_{\theta_{\mathrm{enc}}}(s_t)\rightarrow z_t$, $M_{\theta_{\mathrm{dyn}}}(z_t,a_t)\rightarrow\hat z_{t+1}$, and $\operatorname{Dec}_{\theta_{\mathrm{dec}}}(\hat z_{t+1})\rightarrow\hat s_{t+1}$. AALT additionally trains an inverse dynamics model to predict which action connected two consecutive states, $\operatorname{Inv}_{\theta_{\mathrm{inv}}}(z_t,z_{t+1})\rightarrow\hat a_t$. The forward model encourages the encoder to distinguish states with different action-conditioned outcomes, while the inverse model requires consecutive state embeddings to retain enough information to identify the action between them.

AALT constructs the topology by joining pairs of demonstrated state embeddings satisfying $\lVert z_i-z_j\rVert_\infty\leq\epsilon$ and taking the connected components of the resulting graph as state clusters, each represented by its mean embedding. Clusters occurring at demonstration starts or terminal states become hubs, as do clusters with multiple distinct predecessors or successors. Consecutive hub visits then define the directed edges in $\mathcal E$. To match new observations during execution, AALT trains a symmetric binary matcher $\operatorname{Match}_{\theta_{\mathrm{match}}}(z_i,z_j)\in[0,1]$ from the cluster identities and augmented observations, accepting matches above a threshold $\eta$. AALT also trains a categorical diffusion policy on the demonstrated action segments between consecutive hubs. The policy reconstructs masked pick-and-place action sequences conditioned on the current image, recent observation history, and source and target hub embeddings. We write $\pi_{\theta_\pi}(\mathbf a_{t:t+K-1}\mid s_t,\operatorname{history}_t,\bar z_{h_i},\bar z_{h_j})$, where the segment connects $h_i$ to $h_j$ and $K$ is the prediction horizon.

\subsection{Inference: Composing Behaviors to Solve Tasks}

At inference, AALT first grounds the requested task in the learned topology -- it encodes the current observation and uses the learned matcher to identify a corresponding start hub. The requested goal determines the set of terminal hubs whose demonstrated states satisfy that goal. AALT then applies Dijkstra's algorithm to find a route from the start hub to any goal-satisfying hub, assigning each edge $e$ the cost $-\log r_e$. The resulting minimum-cost route is the one with the greatest product of edge reliabilities. Equivalently, this favors routes that remain reliable across all of their constituent behaviors rather than routes containing a particularly unreliable edge. If the current state cannot be matched to a hub or no route reaches a goal-satisfying hub, the task is not currently supported by the topology. AALT can warn the user, or select the closest state. 

AALT executes the selected route one edge at a time using the diffusion policy. For each current edge $(h_i,h_j)$, the policy is conditioned on the latest observation, recent observation history, and the latent representations of its source and target hubs. The topology therefore specifies which intermediate state should be reached next, while the policy generates the low-level actions needed to reach it. After each action, AALT encodes the new observation and uses the matcher to determine whether the target hub $h_j$ has been reached. Once it has, AALT advances to the next edge and conditions the same policy on the next hub pair. Otherwise, it continues acting toward the current target from the updated observation. Execution succeeds when a goal-satisfying terminal hub is reached. Each attempted edge is recorded as successful if its target hub is reached and as failed otherwise, and these outcomes update its reliability between episodes.

\subsection{AALT as a Surrogate for Reachability Information Gain}
\label{sec:theory}

AALT uses connectivity gain (Eqn~\ref{eq:connectivity_gain}) as a surrogate for weighted reachability information gain. This quantity measures how much a bridge query reveals about which target tasks will be reachable after the query, weighted by their probability under the target distribution.

\begin{aaltdefinitionone}[Weighted reachability information gain]
\manualtheoremlabel{1}{def:rig}
At acquisition round $b$, consider a candidate bridge $e$. Let $Y_e^{\mathrm{acq}}\in\{0,1\}$ indicate whether the expert successfully provides the bridge, and let $Z_x$ denote the resulting reachability of target task $x$: $Z_x=R_{\Gamma_b+e}(x)$ if $Y_e^{\mathrm{acq}}=1$, and $Z_x=R_{\Gamma_b}(x)$ otherwise. Holding the learned hub set $\mathcal{H}$ fixed, the weighted reachability information gain of $e$ is
\begin{equation}
    \operatorname{RIG}_{\rho}(e)
    =
    \sum_{x\in\operatorname{supp}(\rho)}
    \rho(x)\,
    I\!\left(
        Z_x;Y_e^{\mathrm{acq}}
        \mid
        \mathcal{D}_b,\mathcal{H}
    \right).
    \label{eq:reachability_information_gain}
\end{equation}
\end{aaltdefinitionone}

Here, $\mathcal{D}_b$ is the information available before the query, and the mutual information measures how much observing its outcome resolves uncertainty about the resulting reachability of task $x$. Weighting by $\rho(x)$ gives greater importance to tasks that occur more often under the target distribution. In Fig.~\ref{fig:main}, acquiring the bridge from $h_2$ to $h_1$ determines the reachability of the three tasks from start $S$ to goals $A$, $B$, and $C$. The query is informative about all three because its outcome determines whether the known route from $S$ to $h_2$ can be composed with the ones from $h_1$ to each goal.

To relate this quantity to AALT's score, let $p_e=\Prb(Y_e^{\mathrm{acq}}=1\mid\mathcal{D}_b,\mathcal{H})$ denote the learner's pre-query probability that the expert can provide a feasible demonstration for bridge $e$. Let $\operatorname{Ent}(p_e)=-p_e\log p_e-(1-p_e)\log(1-p_e)$ denote the corresponding uncertainty. The following result separates $\operatorname{RIG}_{\rho}(e)$ into the value created when acquisition succeeds and the uncertainty over whether it will succeed.

\begin{aaltpropositionone}[Binary-reliability factorization]
\manualtheoremlabel{1}{prop:main_binary_factorization}
For binary-reliability AALT, in which every acquired edge is treated as perfectly reliable,
\begin{equation}
    \operatorname{RIG}_{\rho}(e)
    =
    \operatorname{Ent}(p_e)\,\Delta(e).
    \label{eq:binary_reachability_factorization}
\end{equation}
\end{aaltpropositionone}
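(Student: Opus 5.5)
In the binary-reliability setting every reachability value is an indicator, so each mutual-information term in $\operatorname{RIG}_{\rho}(e)$ can be computed task by task. The plan is to show that each term equals either $\operatorname{Ent}(p_e)$ or $0$, according to whether adding $e$ flips the task from unreachable to reachable. Summing with the weights $\rho(x)$ then recovers $\Delta(e)$.

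\textbf{Step 1: reachability values are $0/1$.} With $r_e\in\{0,1\}$ for every edge, including the prior value $r_0$ given to the hypothetical edge, the product in Eqn~\ref{eq:task_connectivity} is $0$ or $1$. Hence $R_{\Gamma_b}(x)$ and $R_{\Gamma_b+e}(x)$ are both in $\{0,1\}$. These two numbers are deterministic functions of $\mathcal{D}_b$ and $\mathcal{H}$ (the topology is fixed once we condition). Adding an edge can only add paths, and the maximum over paths is monotone, so $R_{\Gamma_b+e}(x)\ge R_{\Gamma_b}(x)$. Every task therefore falls into exactly one of three cases:
\begin{itemize}
\item[(a)] both values are $0$;
\item[(b)] both values are $1$;
\item[(c)] $R_{\Gamma_b}(x)=0$ and $R_{\Gamma_b+e}(x)=1$.
\end{itemize}
It follows that
\[
\Delta(e)=\sum_{x}\rho(x)\bigl(R_{\Gamma_b+e}(x)-R_{\Gamma_b}(x)\bigr)=\sum_x\rho(x)\,\mathbf 1[x\in(c)].
\]

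\textbf{Step 2: the per-task mutual information.} By Definition~\ref{def:rig}, $Z_x$ is a deterministic function of $Y_e^{\mathrm{acq}}$ given $\mathcal{D}_b,\mathcal{H}$, so
\[
I(Z_x;Y_e^{\mathrm{acq}}\mid\mathcal{D}_b,\mathcal{H})=H(Z_x\mid\mathcal{D}_b,\mathcal{H}).
\]
In cases (a) and (b), $Z_x$ is constant, so the entropy is $0$. In case (c), $Z_x=Y_e^{\mathrm{acq}}$ exactly. Its conditional law is then $\operatorname{Bernoulli}(p_e)$, so the entropy is $\operatorname{Ent}(p_e)$. Together,
\[
I(Z_x;Y_e^{\mathrm{acq}}\mid\mathcal{D}_b,\mathcal{H})=\operatorname{Ent}(p_e)\bigl(R_{\Gamma_b+e}(x)-R_{\Gamma_b}(x)\bigr).
\]

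\textbf{Step 3: sum over tasks.} Weighting by $\rho(x)$ and summing over $\operatorname{supp}(\rho)$ pulls the common factor $\operatorname{Ent}(p_e)$ out of the sum. What remains is the expression for $\Delta(e)$ from Step~1, which gives Eqn~\ref{eq:binary_reachability_factorization}.

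\textbf{Main obstacle.} The delicate step is justifying that, conditional on $\mathcal{D}_b$ and $\mathcal{H}$, the only randomness in $Z_x$ comes from $Y_e^{\mathrm{acq}}$. This requires the binary-reliability convention that an acquired edge enters with reliability $r_0=1$ and is never degraded within the round, so that $R_{\Gamma_b+e}$ is not itself random. I would state this modeling assumption explicitly, noting that it is exactly what ``treated as perfectly reliable'' means. I would also note the monotonicity of $R$ under edge addition, which rules out a decrease in reachability that would otherwise break the identity $H(Z_x)=\operatorname{Ent}(p_e)\cdot(\text{increment})$.
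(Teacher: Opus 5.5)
Your proof is correct and follows essentially the same argument as the paper: each per-task mutual information is $\operatorname{Ent}(p_e)$ when adding $e$ changes the task's reachability and $0$ otherwise, and binarity plus monotonicity under edge addition force every change to be a $0\to 1$ jump, so the weighted count of changed tasks is exactly $\Delta(e)$. The paper routes this through the intermediate weight $w'_\rho(e)=\sum_{x\in\mathcal{T}_e}\rho(x)$, proving $\operatorname{RIG}_\rho(e)=\operatorname{Ent}(p_e)\,w'_\rho(e)$ and $\Delta(e)\le w'_\rho(e)$ for arbitrary reliabilities so that both can be reused for Proposition 2, whereas you inline both steps into one binary-case identity.
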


Under binary reliability, every task whose reachability depends on the query outcome changes from unreachable to reachable when acquisition succeeds. Its post-query reachability therefore reveals the acquisition outcome exactly. A task with the same reachability under success and failure reveals nothing about that outcome. Because every changed task gains exactly one, their total target-distribution weight is precisely AALT's connectivity gain $\Delta(e)$.

The factorization separates two values of the query. Connectivity gain measures what successful acquisition would add to the agent's capabilities, while the entropy term measures uncertainty about whether that acquisition will occur. In Fig.~\ref{fig:main}, connectivity gain values the new routes to $A$, $B$, and $C$ created when the bridge from $h_2$ to $h_1$ is supplied. Learning that this bridge cannot be supplied is informative, but it creates none of those routes. AALT therefore optimizes the successful-acquisition factor because only that outcome directly increases the set of tasks the agent can solve. Appendix~\ref{sec:info_gain} proves Proposition~\ref{prop:main_binary_factorization}.

\begin{aaltpropositiontwo}[Soft-reliability bound]
\manualtheoremlabel{2}{prop:main_soft_bound}
For soft-reliability AALT,
\begin{equation}
    \operatorname{Ent}(p_e)\,\Delta(e)
    \leq
    \operatorname{RIG}_{\rho}(e).
    \label{eq:soft_reachability_bound}
\end{equation}
The bound is exact whenever every task whose reachability changes moves from zero to one.
\end{aaltpropositiontwo}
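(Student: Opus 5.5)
The plan is to show that, conditional on $\mathcal{D}_b$ and $\mathcal{H}$, each $Z_x$ is a deterministic function of the single binary variable $Y_e^{\mathrm{acq}}$. That makes each mutual information term equal either to $\operatorname{Ent}(p_e)$ or to $0$, and the inequality then reduces to a pointwise comparison of reachability values.

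Fix a target task $x$ and write $a_x = R_{\Gamma_b+e}(x)$ and $c_x = R_{\Gamma_b}(x)$.
\begin{itemize}
\item Given $\mathcal{D}_b$ and $\mathcal{H}$, the topology $\Gamma_b$ and its reliabilities $r_{e'}=\alpha_{e'}/(\alpha_{e'}+\beta_{e'})$ are fixed, since they are posterior means computed from $\mathcal{D}_b$.
\item The hypothetical edge is inserted with the fixed prior reliability $r_0$.
\item Hence $a_x$ and $c_x$ are constants in $[0,1]$, and $Z_x = a_x Y_e^{\mathrm{acq}} + c_x(1-Y_e^{\mathrm{acq}})$.
\end{itemize}
If $a_x = c_x$, then $Z_x$ is constant and $I(Z_x;Y_e^{\mathrm{acq}}\mid\mathcal{D}_b,\mathcal{H})=0$. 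If $a_x\neq c_x$, the map $Y\mapsto Z_x$ is injective. In that case $I(Z_x;Y) = H(Y) - H(Y\mid Z_x) = H(Y) = \operatorname{Ent}(p_e)$. Writing $\mathcal{X}_e=\{x\in\operatorname{supp}(\rho): a_x\neq c_x\}$, this gives
\begin{equation*}
\operatorname{RIG}_{\rho}(e) = \operatorname{Ent}(p_e)\,\rho(\mathcal{X}_e).
\end{equation*}

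Next I compare $\rho(\mathcal{X}_e)$ with $\Delta(e)$. By Eqs.~\ref{eq:soft_connectivity} and~\ref{eq:connectivity_gain},
\begin{equation*}
\Delta(e)=\sum_{x\in\operatorname{supp}(\rho)}\rho(x)\,(a_x-c_x).
\end{equation*}
The difference $a_x - c_x$ has three properties:
\begin{itemize}
\item It is nonnegative. Adding the edge $e$ only enlarges the set of paths in the maximum of Eq.~\ref{eq:task_connectivity}, and the empty-path convention gives the value $0$.
\item It vanishes outside $\mathcal{X}_e$.
\item It is at most $1$, because both values lie in $[0,1]$: they are products of reliabilities in $[0,1]$.
\end{itemize}
Hence $\Delta(e)\le\sum_{x\in\mathcal{X}_e}\rho(x)=\rho(\mathcal{X}_e)$. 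Multiplying by $\operatorname{Ent}(p_e)\ge 0$ yields Eq.~\ref{eq:soft_reachability_bound}.

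For the exactness claim, suppose every changed task moves from $c_x=0$ to $a_x=1$. Then $a_x-c_x=1$ on $\mathcal{X}_e$, so $\Delta(e)=\rho(\mathcal{X}_e)$ and equality holds. If $\operatorname{Ent}(p_e)>0$, the converse also holds: equality forces $a_x-c_x=1$ for every $x\in\mathcal{X}_e$ with $\rho(x)>0$.

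I expect the main obstacle to be justifying the first step, namely that $Z_x$ carries no randomness beyond $Y_e^{\mathrm{acq}}$ once we condition on $\mathcal{D}_b,\mathcal{H}$. This is where the soft-reliability setting differs from a genuinely stochastic execution outcome. I would argue it directly from the definition of $Z_x$ in Definition~\ref{def:rig}: $Z_x$ is defined via $R_{\Gamma}$ evaluated at posterior-mean reliabilities, not via a sampled execution success. The candidate edge's value $r_0$ is likewise fixed in advance, and $\mathcal{H}$ is held fixed. With that settled, the rest is the elementary entropy computation and the monotonicity of the max-product reachability.
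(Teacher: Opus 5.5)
Your proposal is correct and follows the paper's proof essentially step for step. Your identity $\operatorname{RIG}_{\rho}(e)=\operatorname{Ent}(p_e)\,\rho(\mathcal{X}_e)$ is the paper's Lemma A.1, with $\rho(\mathcal{X}_e)=w'_\rho(e)$; your bound $\Delta(e)\le\rho(\mathcal{X}_e)$, obtained from monotonicity and the $[0,1]$ range, is Lemma A.2; and you combine them by multiplying by $\operatorname{Ent}(p_e)\ge 0$ exactly as the paper does, with your converse for $\operatorname{Ent}(p_e)>0$ matching Remark A.2.
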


With soft reliability, acquiring a bridge may improve a task without making its solution perfectly reliable. Reachability information gain counts the task's full weight whenever its reachability differs between query success and failure, regardless of the size of that difference. Connectivity gain instead scales the task's weight by the magnitude of its improvement. It therefore gives less value to a bridge that only slightly improves a task and more directly reflects the expected increase in task-solving capability. For example, if the bridge from $h_2$ to $h_1$ makes the three tasks leading to $A$, $B$, and $C$ fully reachable, each contributes its full task weight and the bound becomes exact. Proof in Appendix~\ref{sec:info_gain}.

Together, these results specify the sense in which connectivity gain is a surrogate for reachability information gain. It is exactly the successful-acquisition factor under binary reliability, and its entropy-scaled value lower-bounds reachability information gain under soft reliability. This is a pointwise relationship rather than a guarantee that the two objectives rank all bridges identically when their acquisition uncertainties differ. AALT isolates the component that directly measures added task-solving capability and can be evaluated from the learned topology.

\section{Related Work}
\label{sec:related-work}
\noindent\textbf{Active \& Imitation Learning.} Active learning reduces annotation cost by allowing a learner to select the examples for which supervision is expected to be most valuable \cite{settles2009active,ren2021survey,zhan2022comparative,li2024survey}. Imitation learning similarly seeks to learn behavior from expert demonstrations rather than through direct reward optimization \cite{hussein2017imitation,osa2018algorithmic,zare2024survey,correia2024survey}. Active imitation learning combines these ideas by selectively requesting demonstrations for chosen portions of a task, ranging from individual decisions or state-to-state segments to complete task executions \cite{ross2011dagger,judah2014active}. Most such methods are policy-centric, selecting demonstrations expected to provide the most information about the expert policy, often in states where the learner is likely to disagree with the expert or fail \cite{zhang2017safedagger,hoque2022thriftydagger,silver2012active}. Active Multi-task Fine-tuning (AMF), for example, selects complete start--goal tasks according to their expected information gain about a shared expert policy \cite{bagatella2025amf}.

\noindent\textbf{Latent Models \& Compositionality.} Latent representations provide compact encodings of high-dimensional modalities such as images or text, allowing models to predict, reconstruct, or plan within a lower-dimensional space \cite{ha2018recurrent,hafner2019learning,hafner2020dream,hu2022model}. Latent variables can also represent temporally extended behaviors, enabling long-horizon tasks to be composed from reusable behavioral units. SeCTAR learns latent trajectory representations for hierarchical planning over temporally extended behaviors \cite{coreyes2018self}, while SkiMo jointly learns latent skills and skill-level dynamics for long-horizon skill composition \cite{shi2022skill}. Most directly inspiring this work, ZALT constructs a topology over latent hub states from a fixed demonstration dataset, learns policies over its edges, and composes hub-to-hub behaviors to solve unseen start--goal tasks zero-shot \cite{jacobson2026zalt}. AALT utilizes a similar latent behavioral topology in active imitation learning, using the topology to identify demonstrations whose missing edges provide the greatest increase in start--goal connectivity.

\section{Experiments}
\label{sec:experiments}

\subsection{Setup}

Existing active IL objectives do not explicitly reward demonstrations for the new tasks they enable through composition; AALT instead requests short missing behaviors that maximize start--goal connectivity. We have designed this simulated robot experiment to evaluate this.

\noindent\textbf{Toolshelf domain.}
We evaluate in a simulated UR5e robot arm servicing cell where the robot retrieves ordered sets of 3 colored canisters for assembly and maintenance work orders. Movable blockers control access to different shelf regions, so tasks may require rearrangement before retrieval. Agents receive top-down RGB observations and an ordered three-canister goal vector and operate over 465 discrete pick-and-place actions. During execution, all methods receive the same environment-provided action mask, which removes invalid pick-and-place actions. More information on this environment can be found in Appendix~\ref{sec:robotic_environment}.

\noindent\textbf{Tasks and initial data.}
Each task pairs a shelf layout (start) with an ordered three-canister work order (goal). The six layouts fall into three access modes in which movable guards leave the left, center, or right shelf region open. The twelve work orders are likewise divided into three families of four goals whose requested canisters are stored primarily in one of these regions. Crossing all six layouts with all twelve work orders produces 72 possible tasks. The 24 initial demonstrations contain 196 transitions and model historical operation in which each shelf was already staged for the corresponding work-order family. Thus, every layout and every work order appears in the initial data, but 48 cross-family combinations are omitted. Each method receives only the demonstrations and has no prior information about work-order families.

\noindent\textbf{Compared methods.}
For this experiment, AALT constructs its latent topology from the 24 initial demonstrations. Acquisition stops when the best remaining gain falls below the fixed threshold $\delta=0.08$. We compare AALT with two variants of Active Multi-task Fine-tuning (AMF) and a random bridge-selection baseline. AMF selects the demonstration expected to provide the most information about the shared policy. Intuitively, it favors queries that reduce uncertainty about which actions the expert would take. \textit{AMF-Full} follows the original setting and selects among complete start--goal task demonstrations. \textit{AMF-Bridge} instead selects among the same grounded bridge queries available to AALT, but ranks them using AMF information gain. \textit{Random-Bridge} samples uniformly from the same set of admissible grounded bridge queries. It tests whether improvements result from AALT's connectivity-based acquisition objective rather than simply from allowing bridge demonstrations. AALT, AMF-Bridge, and Random-Bridge use the same learned topology and controller, differing only in how they select queries. All methods use categorical diffusion policies with the same architecture, initial demonstrations, expert, accumulated replay, and environment-provided action mask during execution. Additional baseline details are provided in Appendix~\ref{sec:baselines}.

\noindent\textbf{Expert and query grounding.}
The expert uses A* search over the symbolic shelf state (symbolic states are only available to the expert, not the evaluated methods). \textit{Full-task queries} request a plan from a task's initial state to its ordered retrieval goal. \textit{Bridge queries} instead provide concrete source and destination states associated with two hubs and request a trajectory connecting them. A query is unsuccessful when no feasible trajectory is found. 

\noindent\textbf{Learning protocol.}
All methods are trained on the same initial dataset before acquisition. After each successful query, the returned demonstration is added to the accumulated dataset and the shared policy is updated before the next evaluation. Failed queries provide no policy-training trajectory. Models remain fixed during each rollout, and no evaluation experience is added to the policy dataset.


\subsection{Results}

\begin{figure}[t]
    \centering
    \includegraphics[width=0.9\linewidth]{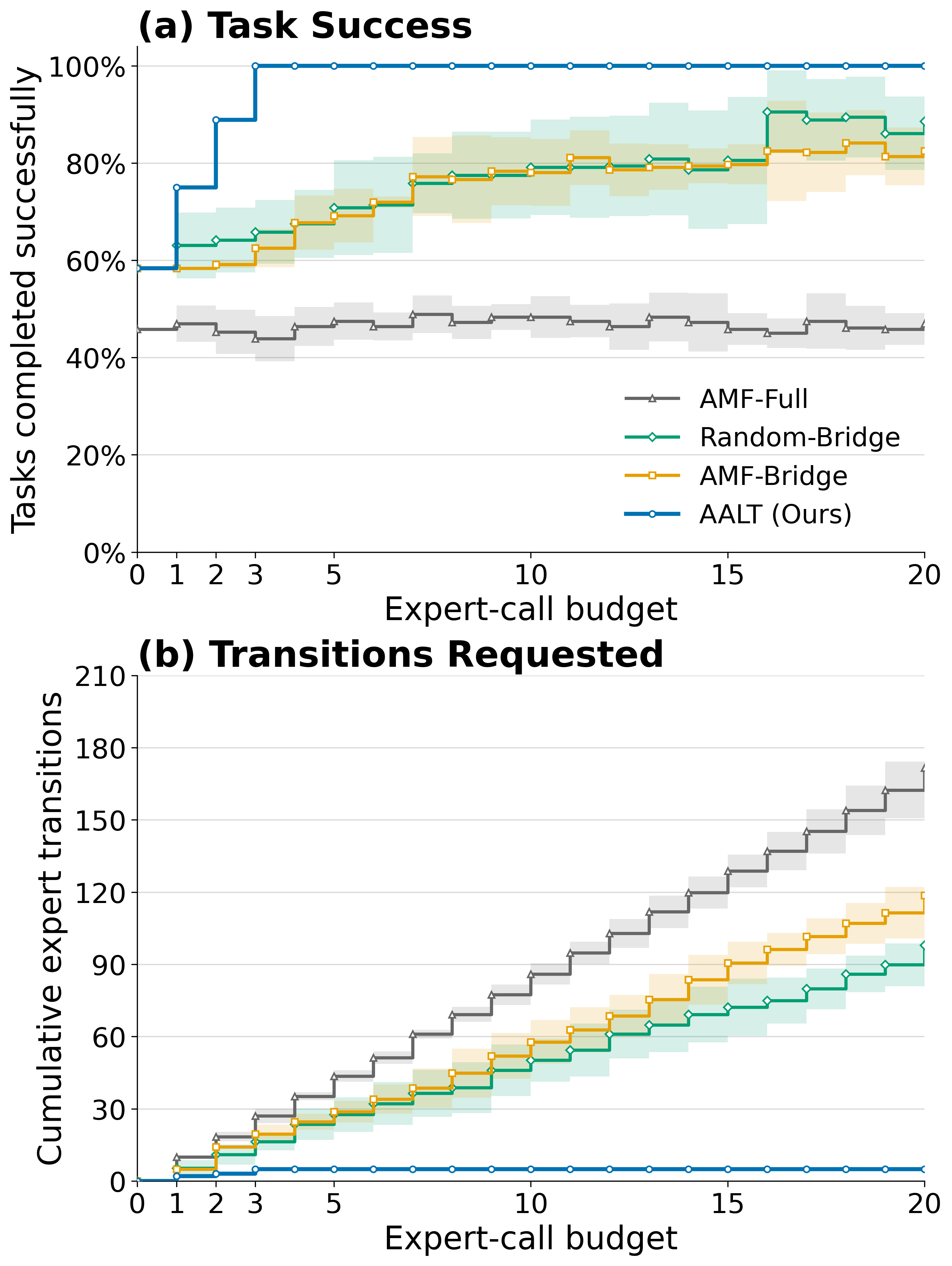}
    \caption{AALT solves all 72 tasks after only three expert calls totaling five transitions, whereas after 20 calls, the most effective baseline Random-Bridge reaches $88.6 \pm 9.9\%$ using $98.0 \pm 9.8$ transitions. (a) Target-task success under increasing expert-call budgets ($\uparrow$). (b) Cumulative expert transitions requested ($\downarrow$).
    }
    \label{fig:res1}
\end{figure}

Figure~\ref{fig:res1}(a) reports task success and Figure~\ref{fig:res1}(b) reports cumulative expert transitions, with means and sample standard deviations over five adaptation seeds. AALT begins by solving 42 of the 72 tasks, then improves to 54 after its first query, 64 after its second, and 72 after its third. These demonstrations contain two, one, and two transitions, respectively, for five total. The best remaining connectivity gain then falls below the fixed threshold $\delta=0.08$, so acquisition stops. Random-Bridge reaches $88.6\pm9.9\%$ success after 20 demonstrations containing $98.0\pm9.8$ transitions. AMF-Bridge reaches $82.5\pm4.0\%$ using $118.6\pm11.3$ transitions across 20 demonstrations. AMF-Full begins at 33 successful tasks and reaches $46.9\pm2.7\%$ after 20 demonstrations containing $171.6\pm12.4$ transitions. AALT therefore achieves the highest success while requesting the fewest demonstrations and expert transitions.

The final failures of the bridge methods separate into missing routes and failures to execute available routes. AMF-Bridge ends with $8.4\pm5.9$ tasks lacking a route and $4.2\pm4.3$ tasks for which a route exists but the policy fails to complete it. Random-Bridge ends with $3.8\pm5.5$ tasks lacking a route and $4.4\pm2.7$ unsuccessful route executions. AALT has neither type of failure and never loses a previously successful task after a policy update. By the final evaluation, AMF-Bridge has lost $3.0\pm2.4$ of its 42 initial successes, Random-Bridge has lost $2.6\pm2.6$, and AMF-Full has lost $8.0\pm1.0$ of its 33 initial successes. AMF-Full gains $8.8\pm1.5$ previously unsuccessful tasks but loses eight initial successes, leaving a net improvement of only $0.8\pm1.9$ tasks.

The acquired AALT bridges are reused across many solutions. The three queries immediately enable 12, 10, and 8 additional tasks and appear in 18, 10, and 8 final task routes, respectively. Every one of the 30 newly solved tasks uses at least one acquired bridge, and six compose two of them. Across seeds, 67 of the 100 AMF-Bridge queries lead directly to terminal hubs. These demonstrations average 7.94 transitions and are followed by a net loss of 19 solved tasks, whereas its 21 queries into start-class hubs average 1.43 transitions and are followed by a net gain of 98. AMF-Full repeats an already queried task in 49 of its 100 calls, consuming $47.8\%$ of its expert transitions on repeated demonstrations.

These results show that access to bridge queries alone does not explain AALT's performance. Random-Bridge samples from the same candidate set and eventually discovers several useful access bridges, but it finds them late and inconsistently. AMF-Bridge also uses the same bridge format, but its policy-information objective often selects longer demonstrations that support fewer tasks. AMF-Full further shows that complete-task acquisition can spend substantial expert effort repeating individual tasks while producing little improvement over the full task distribution. AALT instead identifies three short missing connections that make existing behaviors available to many additional start--goal tasks. Its gains therefore follow from selecting bridges according to the connectivity they create, not just the bridge format by itself.

\section{Conclusion}
\label{sec:conclusion}


This work has introduced AALT, a composition-aware active imitation learning method that requests missing bridge demonstrations according to the start--goal connectivity they create. AALT expanded from 42/72 to 72/72 tasks using just three expert demonstrations. It optimizes connectivity gain (a surrogate for information gain about task reachability). We discuss the scope and limitations of our method, including its reliance on hub identification and assumptions on the setting, in Appendix~\ref{sec:limitations}.

\clearpage

\bibliography{aaai2027}

@article{hussein2017imitation,
  title={Imitation learning: A survey of learning methods},
  author={Hussein, Ahmed and Gaber, Mohamed Medhat and Elyan, Eyad and Jayne, Chrisina},
  journal={ACM Computing Surveys (CSUR)},
  volume={50},
  number={2},
  pages={1--35},
  year={2017},
  publisher={ACM New York, NY, USA}
}

@article{osa2018algorithmic,
  title={An algorithmic perspective on imitation learning},
  author={Osa, Takayuki and Pajarinen, Joni and Neumann, Gerhard and Bagnell, J Andrew and Abbeel, Pieter and Peters, Jan},
  journal={Foundations and Trends{\textregistered} in Robotics},
  volume={7},
  number={1-2},
  pages={1--179},
  year={2018},
  publisher={Emerald Publishing Limited}
}

@inproceedings{hu2022model,
  title     = {Model-Based Imitation Learning for Urban Driving},
  author    = {Hu, Anthony and Corrado, Gianluca and Griffiths, Nicolas and Murez, Zachary and Gurau, Corina and Yeo, Hudson and Kendall, Alex and Cipolla, Roberto and Shotton, Jamie},
  booktitle = {Advances in Neural Information Processing Systems},
  volume    = {35},
  pages     = {20703--20716},
  year      = {2022}
}

@InProceedings{bagatella2025amf,
  title = 	 {Active Fine-Tuning of Multi-Task Policies},
  author =       {Bagatella, Marco and H\"{u}botter, Jonas and Martius, Georg and Krause, Andreas},
  booktitle = 	 {Proceedings of the 42nd International Conference on Machine Learning},
  pages = 	 {2409--2441},
  year = 	 {2025},
  editor = 	 {Singh, Aarti and Fazel, Maryam and Hsu, Daniel and Lacoste-Julien, Simon and Berkenkamp, Felix and Maharaj, Tegan and Wagstaff, Kiri and Zhu, Jerry},
  volume = 	 {267},
  series = 	 {Proceedings of Machine Learning Research},
  month = 	 {13--19 Jul},
  publisher =    {PMLR},
  url = 	 {https://proceedings.mlr.press/v267/bagatella25a.html}
}

@article{ren2021survey,
  title={A survey of deep active learning},
  author={Ren, Pengzhen and Xiao, Yun and Chang, Xiaojun and Huang, Po-Yao and Li, Zhihui and Gupta, Brij B and Chen, Xiaojiang and Wang, Xin},
  journal={ACM computing surveys (CSUR)},
  volume={54},
  number={9},
  pages={1--40},
  year={2021},
  publisher={ACM New York, NY}
}

@article{li2024survey,
  title={A survey on deep active learning: Recent advances and new frontiers},
  author={Li, Dongyuan and Wang, Zhen and Chen, Yankai and Jiang, Renhe and Ding, Weiping and Okumura, Manabu},
  journal={IEEE Transactions on Neural Networks and Learning Systems},
  volume={36},
  number={4},
  pages={5879--5899},
  year={2024},
  publisher={IEEE}
}

@article{zhan2022comparative,
  title={A comparative survey of deep active learning},
  author={Zhan, Xueying and Wang, Qingzhong and Huang, Kuan-hao and Xiong, Haoyi and Dou, Dejing and Chan, Antoni B},
  journal={arXiv preprint arXiv:2203.13450},
  year={2022}
}

@misc{settles2009active,
  title={Active learning literature survey},
  author={Settles, Burr},
  year={2009},
  publisher={University of Wisconsin-Madison Department of Computer Sciences}
}

@article{zare2024survey,
  title={A survey of imitation learning: Algorithms, recent developments, and challenges},
  author={Zare, Maryam and Kebria, Parham M and Khosravi, Abbas and Nahavandi, Saeid},
  journal={IEEE Transactions on Cybernetics},
  volume={54},
  number={12},
  pages={7173--7186},
  year={2024},
  publisher={IEEE}
}

@article{correia2024survey,
  title={A survey of demonstration learning},
  author={Correia, Andre and Alexandre, Luis A},
  journal={Robotics and Autonomous Systems},
  volume={182},
  pages={104812},
  year={2024},
  publisher={Elsevier}
}

@article{judah2014active,
  title   = {Active Imitation Learning: Formal and Practical
             Reductions to {I.I.D.} Learning},
  author  = {Judah, Kshitij and Fern, Alan P. and
             Dietterich, Thomas G. and Tadepalli, Prasad},
  journal = {Journal of Machine Learning Research},
  volume  = {15},
  number  = {120},
  pages   = {4105--4143},
  year    = {2014},
  url     = {https://jmlr.org/papers/v15/judah14a.html}
}

@inproceedings{ross2011dagger,
  title     = {A Reduction of Imitation Learning and Structured
               Prediction to No-Regret Online Learning},
  author    = {Ross, St{\'e}phane and Gordon, Geoffrey J. and
               Bagnell, J. Andrew},
  booktitle = {Proceedings of the Fourteenth International
               Conference on Artificial Intelligence and Statistics},
  series    = {Proceedings of Machine Learning Research},
  volume    = {15},
  pages     = {627--635},
  year      = {2011},
  publisher = {PMLR},
  url       = {https://proceedings.mlr.press/v15/ross11a.html}
}

@inproceedings{zhang2017safedagger,
  title     = {Query-Efficient Imitation Learning for
               End-to-End Simulated Driving},
  author    = {Zhang, Jiakai and Cho, Kyunghyun},
  booktitle = {Proceedings of the Thirty-First AAAI Conference
               on Artificial Intelligence},
  volume    = {31},
  pages     = {2891--2897},
  year      = {2017},
  doi       = {10.1609/aaai.v31i1.10857},
  url       = {https://doi.org/10.1609/aaai.v31i1.10857}
}

@inproceedings{hoque2022thriftydagger,
  title     = {{ThriftyDAgger}: Budget-Aware Novelty and
               Risk Gating for Interactive Imitation Learning},
  author    = {Hoque, Ryan and Balakrishna, Ashwin and
               Novoseller, Ellen and Wilcox, Albert and
               Brown, Daniel S. and Goldberg, Ken},
  booktitle = {Proceedings of the 5th Conference on Robot Learning},
  series    = {Proceedings of Machine Learning Research},
  volume    = {164},
  pages     = {598--608},
  year      = {2022},
  publisher = {PMLR},
  url       = {https://proceedings.mlr.press/v164/hoque22a.html}
}

@inproceedings{silver2012active,
  title     = {Active Learning from Demonstration for
               Robust Autonomous Navigation},
  author    = {Silver, David and Bagnell, J. Andrew and
               Stentz, Anthony},
  booktitle = {2012 IEEE International Conference on
               Robotics and Automation},
  pages     = {200--207},
  year      = {2012},
  month     = {May},
  publisher = {IEEE},
  doi       = {10.1109/ICRA.2012.6224757},
  url       = {https://doi.org/10.1109/ICRA.2012.6224757}
}

@inproceedings{ha2018recurrent,
  title     = {Recurrent World Models Facilitate Policy Evolution},
  author    = {Ha, David and Schmidhuber, J{\"u}rgen},
  booktitle = {Advances in Neural Information Processing Systems},
  volume    = {31},
  pages     = {2450--2462},
  year      = {2018},
  url       = {https://arxiv.org/abs/1809.01999}
}

@inproceedings{hafner2019learning,
  title     = {Learning Latent Dynamics for Planning from Pixels},
  author    = {Hafner, Danijar and Lillicrap, Timothy and Fischer, Ian
               and Villegas, Ruben and Ha, David and Lee, Honglak
               and Davidson, James},
  booktitle = {Proceedings of the 36th International Conference
               on Machine Learning},
  series    = {Proceedings of Machine Learning Research},
  volume    = {97},
  pages     = {2555--2565},
  year      = {2019},
  publisher = {PMLR},
  url       = {https://proceedings.mlr.press/v97/hafner19a.html}
}

@inproceedings{hafner2020dream,
  title     = {Dream to Control: Learning Behaviors by Latent Imagination},
  author    = {Hafner, Danijar and Lillicrap, Timothy and Ba, Jimmy
               and Norouzi, Mohammad},
  booktitle = {International Conference on Learning Representations},
  year      = {2020},
  url       = {https://openreview.net/forum?id=S1lOTC4tDS}
}

@inproceedings{coreyes2018self,
  title     = {Self-Consistent Trajectory Autoencoder:
               Hierarchical Reinforcement Learning with
               Trajectory Embeddings},
  author    = {Co-Reyes, John and Liu, YuXuan and Gupta, Abhishek
               and Eysenbach, Benjamin and Abbeel, Pieter
               and Levine, Sergey},
  booktitle = {Proceedings of the 35th International Conference
               on Machine Learning},
  series    = {Proceedings of Machine Learning Research},
  volume    = {80},
  pages     = {1009--1018},
  year      = {2018},
  publisher = {PMLR},
  url       = {https://proceedings.mlr.press/v80/co-reyes18a.html}
}

@inproceedings{shi2022skill,
  title     = {Skill-Based Model-Based Reinforcement Learning},
  author    = {Shi, Lucy Xiaoyang and Lim, Joseph J. and Lee, Youngwoon},
  booktitle = {Proceedings of the 6th Conference on Robot Learning},
  series    = {Proceedings of Machine Learning Research},
  volume    = {205},
  pages     = {2262--2272},
  year      = {2023},
  publisher = {PMLR},
  url       = {https://proceedings.mlr.press/v205/shi23a.html}
}

@article{jacobson2026zalt,
  title         = {Zero-shot Imitation Learning by Latent Topology Mapping},
  author        = {Jacobson, Maxwell J. and Xue, Yexiang},
  journal       = {arXiv preprint arXiv:2605.08450},
  year          = {2026},
  eprint        = {2605.08450},
  archivePrefix = {arXiv},
  primaryClass  = {cs.LG},
  url           = {https://arxiv.org/abs/2605.08450}
}

@INPROCEEDINGS{chi2025visuomotor, 
    AUTHOR    = {Cheng Chi AND Siyuan Feng AND Yilun Du AND Zhenjia Xu AND Eric Cousineau AND Benjamin CM Burchfiel AND Shuran Song}, 
    TITLE     = {{Diffusion Policy: Visuomotor Policy Learning via Action Diffusion}}, 
    BOOKTITLE = {Proceedings of Robotics: Science and Systems}, 
    YEAR      = {2023}, 
    ADDRESS   = {Daegu, Republic of Korea}, 
    MONTH     = {July}, 
    DOI       = {10.15607/RSS.2023.XIX.026} 
}

\clearpage
\appendix
\setcounter{secnumdepth}{1}
\makeatletter
\renewcommand{\@seccntformat}[1]{\csname the#1\endcsname.\quad}
\makeatother

\section{Proofs for AALT as a Surrogate for Reachability Information Gain}
\label{sec:info_gain}

This appendix proves the binary-reliability factorization in Proposition~\ref{prop:main_binary_factorization} and the soft-reliability bound in Proposition~\ref{prop:main_soft_bound}. Intuitively, we show that AALT's connectivity gain is a surrogate of information gain about task reachability. In a binary-reliability version of AALT, weighted reachability information gain factorizes exactly into AALT's connectivity gain and the uncertainty about whether the queried bridge can be acquired. For soft reliabilities, we derive a lower-bound relationship and characterize the resulting gap. We first formalize the acquisition setting and derive two auxiliary results relating weighted reachability information gain to AALT's connectivity gain. We then restate and prove the two propositions from the main text.

\begin{aaltdefinitionAone}[Fixed-hub acquisition setting]
\manualtheoremlabel{A.1}{def:fixed_hub_acquisition}
The analysis applies to AALT's acquisition phase after the latent hub set $\mathcal{H}$ has been learned from the initial dataset $\mathcal{D}_0$. Pretraining produces the initial topology $\Gamma_0=(\mathcal{H},\mathcal{E}_0)$. Let $b\in\{0,\ldots,B\}$ denote the number of completed queries. After $b$ queries, the topology is
\[
    \Gamma_b=(\mathcal{H},\mathcal{E}_b),
    \qquad
    \mathcal{E}_0\subseteq\mathcal{E}_b
    \subseteq\mathcal{H}\times\mathcal{H}.
\]
The hub set $\mathcal{H}$ remains fixed throughout acquisition, while successful queries add edges to $\mathcal{E}_b$. The information available after the first $b$ queries is denoted by $\mathcal{D}_b$, including failed query outcomes that return no trajectory for policy training.

At acquisition round $b$, let $\mathcal{E}_{\mathrm{cand},b}$ be a finite subset of the missing directed edges $(\mathcal{H}\times\mathcal{H})\setminus\mathcal{E}_b$. For a candidate edge $e=(h_i,h_j)$, let $Y_e^{\mathrm{acq}}\in\{0,1\}$ denote the query outcome. The outcome is one if the expert returns a feasible trajectory connecting $h_i$ to $h_j$ and zero otherwise. The resulting edge set is
\[
    \mathcal{E}_{b+1}
    =
    \begin{cases}
        \mathcal{E}_b\cup\{e\},
        & Y_e^{\mathrm{acq}}=1,\\
        \mathcal{E}_b,
        & Y_e^{\mathrm{acq}}=0.
    \end{cases}
\]

The learner's pre-query probability that the expert can provide a feasible demonstration for $e$ is
\[
    p_e
    =
    \Prb\!\left(
        Y_e^{\mathrm{acq}}=1
        \mid
        \mathcal{D}_b,\mathcal{H}
    \right).
\]
For $p\in[0,1]$, binary entropy is
\[
    \operatorname{Ent}(p)
    =
    -p\log p-(1-p)\log(1-p),
\]
where $0\log 0:=0$. Thus, $\operatorname{Ent}(p_e)$ is the uncertainty associated with the acquisition outcome.
\end{aaltdefinitionAone}

\noindent\textbf{Intuition.}
Fixing $\mathcal{H}$ isolates the effect of acquisition. A query either adds one specified bridge or leaves the topology unchanged. The uncertainty represented by $p_e$ concerns which of these two outcomes will occur.

\begin{aaltremarkAone}[Acquisition feasibility and execution reliability]
\manualtheoremlabel{A.1}{rem:acquisition_execution}
Acquisition feasibility and execution reliability are distinct. The probability $p_e$ concerns whether the expert can provide any feasible trajectory for a missing bridge. The edge reliability $r_e$ concerns whether the learned policy can execute that bridge after it has been acquired. Current AALT estimates $r_e$ from execution outcomes but does not estimate or use $p_e$.
\end{aaltremarkAone}

\begin{aaltdefinitionAtwo}[Post-query task reachability]
\manualtheoremlabel{A.2}{def:post_query_reachability}
For a topology $\Gamma=(\mathcal{H},\mathcal{E})$ and task $x=(s_0,g)$, let $h_s\in\mathcal{H}$ be the hub matched to $s_0$, and let $\mathcal{H}_g\subseteq\mathcal{H}$ be the set of terminal hubs satisfying $g$. The reachability of $x$ is
\[
    R_{\Gamma}(x)
    =
    \max_{\substack{
        h_g\in\mathcal{H}_g\\
        P:h_s\rightsquigarrow h_g
    }}
    \prod_{e'\in P}r_{e'}.
\]
If no such path exists, then $R_\Gamma(x)=0$.

Let
\[
    \Gamma_b+e
    =
    \left(
        \mathcal{H},
        \mathcal{E}_b\cup\{e\}
    \right)
\]
denote the topology produced by successfully acquiring candidate $e$. The candidate edge is assigned the initial execution reliability $r_0$, while all existing edge reliabilities remain unchanged. The post-query reachability of task $x$ is the random variable
\[
    Z_x
    =
    \begin{cases}
        R_{\Gamma_b+e}(x),
        & Y_e^{\mathrm{acq}}=1,\\
        R_{\Gamma_b}(x),
        & Y_e^{\mathrm{acq}}=0.
    \end{cases}
\]
The dependence of $Z_x$ on candidate $e$ is left implicit.
\end{aaltdefinitionAtwo}

\noindent\textbf{Intuition.}
The query outcome selects between two fixed reachability values. If acquisition succeeds, the task is evaluated in the topology containing the bridge. If acquisition fails, it retains its previous reachability.

\begin{aaltdefinitionAthree}[Weighted reachability information gain]
\manualtheoremlabel{A.3}{def:appendix_rig}
Let $\mathcal{X}=\operatorname{supp}(\rho)$ be the finite set of target tasks. The weighted reachability information gain of candidate edge $e$ is
\[
    \operatorname{RIG}_{\rho}(e)
    =
    \sum_{x\in\mathcal{X}}
    \rho(x)\,
    I\!\left(
        Z_x;Y_e^{\mathrm{acq}}
        \mid
        \mathcal{D}_b,\mathcal{H}
    \right).
\]

Define the set of tasks whose reachability differs between successful and unsuccessful acquisition as
\[
    \mathcal{T}_e
    :=
    \left\{
        x\in\mathcal{X}
        \,\middle|\,
        R_{\Gamma_b+e}(x)
        \neq
        R_{\Gamma_b}(x)
    \right\}.
\]
Their total probability under the target distribution is
\[
    w'_\rho(e)
    :=
    \sum_{x\in\mathcal{T}_e}\rho(x).
\]
\end{aaltdefinitionAthree}

\noindent\textbf{Intuition.}
A task contributes information only when the query outcome changes its reachability. The quantity $w'_\rho(e)$ counts the complete target-distribution weight of all such tasks.

\begin{aaltdefinitionAfour}[Connectivity gain]
\manualtheoremlabel{A.4}{def:appendix_connectivity_gain}
The connectivity gain assigned to candidate edge $e$ is
\[
\begin{aligned}
    \Delta(e)
    &:=
    C(\Gamma_b+e)-C(\Gamma_b)\\
    &=
    \sum_{x\in\mathcal{X}}
    \rho(x)R_{\Gamma_b+e}(x)
    -
    \sum_{x\in\mathcal{X}}
    \rho(x)R_{\Gamma_b}(x)\\
    &=
    \sum_{x\in\mathcal{X}}
    \rho(x)
    \left[
        R_{\Gamma_b+e}(x)
        -
        R_{\Gamma_b}(x)
    \right].
\end{aligned}
\]
\end{aaltdefinitionAfour}

\noindent\textbf{Intuition.}
Both $w'_\rho(e)$ and $\Delta(e)$ concern tasks whose reachability changes. The difference is that $w'_\rho(e)$ counts each task's full probability weight, while $\Delta(e)$ scales that weight by the size of the reachability improvement.

\begin{aaltlemmaAone}[Reachability-information decomposition]
\manualtheoremlabel{A.1}{lem:rig_decomposition}
For any assignment of edge reliabilities,
\[
    \operatorname{RIG}_{\rho}(e)
    =
    \operatorname{Ent}(p_e)\,w'_\rho(e).
\]
\end{aaltlemmaAone}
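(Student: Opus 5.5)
The plan is to compute each summand $I(Z_x;Y_e^{\mathrm{acq}}\mid\mathcal{D}_b,\mathcal{H})$ separately and then sum with weights $\rho(x)$. Condition throughout on $\mathcal{D}_b,\mathcal{H}$. Under this conditioning the topology $\Gamma_b$, the reliabilities of existing edges, and the assigned reliability $r_0$ of the candidate are all fixed. Hence the two numbers $a_x:=R_{\Gamma_b}(x)$ and $c_x:=R_{\Gamma_b+e}(x)$ are deterministic constants. By Definition~A.2, $Z_x$ is then a deterministic function of the single Bernoulli($p_e$) variable $Y_e^{\mathrm{acq}}$: $Z_x=c_x$ if $Y_e^{\mathrm{acq}}=1$ and $Z_x=a_x$ otherwise.

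Write the mutual information as $I(Z_x;Y)=H(Y)-H(Y\mid Z_x)$, with $H(Y)=\operatorname{Ent}(p_e)$. Now split into two cases according to whether $x\in\mathcal{T}_e$.

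\emph{Case $x\in\mathcal{T}_e$.} Here $c_x\neq a_x$, so the map $Y\mapsto Z_x$ is injective. Observing $Z_x$ therefore determines $Y$ almost surely, and $H(Y\mid Z_x)=0$. Consequently $I(Z_x;Y)=\operatorname{Ent}(p_e)$.

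\emph{Case $x\notin\mathcal{T}_e$.} Here $c_x=a_x$, so $Z_x$ is constant and hence independent of $Y$. Consequently $I(Z_x;Y)=0$.

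The degenerate endpoints $p_e\in\{0,1\}$ are harmless. With the convention $0\log 0=0$ both sides vanish, and conditional entropies are taken only over outcomes of positive probability.

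Combining the two cases gives $I(Z_x;Y_e^{\mathrm{acq}}\mid\mathcal{D}_b,\mathcal{H})=\operatorname{Ent}(p_e)\,\mathbf{1}[x\in\mathcal{T}_e]$. Multiply by $\rho(x)$ and sum over the finite set $\mathcal{X}$. The factor $\operatorname{Ent}(p_e)$ does not depend on $x$ and can be pulled out, leaving
\[
\operatorname{RIG}_\rho(e)=\operatorname{Ent}(p_e)\sum_{x\in\mathcal{T}_e}\rho(x)=\operatorname{Ent}(p_e)\,w'_\rho(e).
\]
No property of the reliabilities was used beyond their being fixed given $\mathcal{D}_b,\mathcal{H}$, so the identity holds for any assignment of reliabilities, as claimed.

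The only step needing care is justifying that $a_x$ and $c_x$ are non-random given the conditioning. This relies on the fixed-hub setting (Definition~A.1), which holds the hub matching $h_s$, the goal hubs $\mathcal{H}_g$, and all edge reliabilities fixed during the query. With those held fixed, the rest is the elementary fact that an injective function of $Y$ carries all of $H(Y)$ and a constant carries none.
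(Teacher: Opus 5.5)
Your proposal is correct and follows the paper's proof essentially step for step. Conditioning on $\mathcal{D}_b,\mathcal{H}$ fixes the two reachability values, so $Z_x$ is an injective function of $Y_e^{\mathrm{acq}}$ for $x\in\mathcal{T}_e$ (contributing $\operatorname{Ent}(p_e)$) and a constant otherwise (contributing zero); the $\rho$-weighted sum then gives $\operatorname{Ent}(p_e)\,w'_\rho(e)$, and your explicit check of the endpoints $p_e\in\{0,1\}$ is a small addition that the paper only addresses later in Remark~A.2.
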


\begin{proof}
Conditioning on $\mathcal{D}_b$ and $\mathcal{H}$ fixes the two possible values of $Z_x$.

For $x\in\mathcal{T}_e$, successful and unsuccessful acquisition produce distinct reachability values. Because $Z_x$ is a deterministic function of the binary variable $Y_e^{\mathrm{acq}}$ and takes a distinct value under each outcome, observing $Z_x$ determines $Y_e^{\mathrm{acq}}$. Therefore,
\[
    I\!\left(
        Z_x;Y_e^{\mathrm{acq}}
        \mid
        \mathcal{D}_b,\mathcal{H}
    \right)
    =
    \operatorname{Ent}(p_e).
\]

For $x\notin\mathcal{T}_e$, successful and unsuccessful acquisition produce the same reachability value. The variable $Z_x$ is therefore constant with respect to $Y_e^{\mathrm{acq}}$, giving
\[
    I\!\left(
        Z_x;Y_e^{\mathrm{acq}}
        \mid
        \mathcal{D}_b,\mathcal{H}
    \right)
    =
    0.
\]

Substituting these two cases into Definition~\ref{def:appendix_rig} gives
\[
\begin{aligned}
    \operatorname{RIG}_{\rho}(e)
    &=
    \sum_{x\in\mathcal{X}}
    \rho(x)\,
    I\!\left(
        Z_x;Y_e^{\mathrm{acq}}
        \mid
        \mathcal{D}_b,\mathcal{H}
    \right)\\
    &=
    \sum_{x\in\mathcal{T}_e}
    \rho(x)\operatorname{Ent}(p_e)\\
    &=
    \operatorname{Ent}(p_e)
    \sum_{x\in\mathcal{T}_e}\rho(x)\\
    &=
    \operatorname{Ent}(p_e)\,w'_\rho(e).
\end{aligned}
\]
\end{proof}

\noindent\textbf{Intuition.}
Reachability information gain counts every task whose reachability reveals the query outcome. Each such task contributes the full entropy of that outcome, regardless of whether its reachability changes slightly or moves from zero to one.

\begin{aaltlemmaAtwo}[Connectivity-gain bound]
\manualtheoremlabel{A.2}{lem:connectivity_weight_bound}
For any assignment of edge reliabilities,
\[
    \Delta(e)
    \leq
    w'_\rho(e).
\]
Equality holds if and only if
\[
    R_{\Gamma_b+e}(x)-R_{\Gamma_b}(x)=1
\]
for every $x\in\mathcal{T}_e$.
\end{aaltlemmaAtwo}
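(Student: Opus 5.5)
Since Definition~\ref{def:appendix_connectivity_gain} already writes $\Delta(e)$ as a $\rho$-weighted sum of per-task differences, I will compare it term by term with $w'_\rho(e)=\sum_{x\in\mathcal{T}_e}\rho(x)$. The key fact is that each difference $d_x:=R_{\Gamma_b+e}(x)-R_{\Gamma_b}(x)$ lies in $[0,1]$.

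\textbf{Step 1: bounding $d_x$.} Every edge reliability lies in $[0,1]$, so every path product lies in $[0,1]$, and hence $R_\Gamma(x)\in[0,1]$. This includes the empty-path value and the convention $R_\Gamma(x)=0$ when no path exists. Adding the edge $e$ only enlarges the set of paths from $h_s$ to $\mathcal{H}_g$. Existing reliabilities are unchanged, and the new edge receives $r_0$ (Definition~\ref{def:post_query_reachability}). The maximum over a superset of the same products is therefore at least as large, so $R_{\Gamma_b+e}(x)\ge R_{\Gamma_b}(x)$. Together these give $0\le d_x\le 1$.

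\textbf{Step 2: splitting the sum.} For $x\notin\mathcal{T}_e$ we have $d_x=0$ by the definition of $\mathcal{T}_e$, so
\[
\Delta(e)=\sum_{x\in\mathcal{T}_e}\rho(x)\,d_x\le\sum_{x\in\mathcal{T}_e}\rho(x)=w'_\rho(e),
\]
using $d_x\le 1$ and $\rho(x)\ge 0$.

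\textbf{Step 3: equality.} The gap is $w'_\rho(e)-\Delta(e)=\sum_{x\in\mathcal{T}_e}\rho(x)(1-d_x)$, which is a sum of nonnegative terms. If $d_x=1$ for all $x\in\mathcal{T}_e$, the gap is zero. Conversely, a zero gap forces $1-d_x=0$ for every $x$ with $\rho(x)>0$. Since $\mathcal{X}=\operatorname{supp}(\rho)$, every $x\in\mathcal{T}_e$ has $\rho(x)>0$, so $d_x=1$ on all of $\mathcal{T}_e$.

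The only real obstacle is Step 1, specifically the claim that the path maximum is monotone in the edge set. This needs two conditions: the reliabilities of existing edges must not change when $e$ is added, and the max over paths must be well defined when the graph has cycles. The second holds because products of factors in $[0,1]$ cannot increase along a cycle, so restricting to simple paths gives the same maximum over a finite set.
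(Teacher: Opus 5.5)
Your proof is correct and follows the paper's argument essentially step for step. As in the paper, monotonicity of the path maximum together with $R_\Gamma(x)\in[0,1]$ gives $0\le d_x\le 1$, the sum collapses onto $\mathcal{T}_e$, and positivity of $\rho$ on $\mathcal{X}=\operatorname{supp}(\rho)$ yields the equality characterization; your simple-path remark adds a well-definedness detail the paper leaves implicit. One small observation: the monotonicity you flag as the main obstacle is not actually needed for this lemma (the paper invokes it too), since $d_x\le 1$ already follows from $R_{\Gamma_b+e}(x)\le 1$ and $R_{\Gamma_b}(x)\ge 0$, and the gap terms $\rho(x)(1-d_x)$ are nonnegative regardless of the sign of $d_x$; monotonicity only becomes essential in Proposition 1, where it forces every binary change to be $0\to 1$.
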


\begin{proof}
The topology $\Gamma_b+e$ contains every edge and path available in $\Gamma_b$ with unchanged reliability. It also contains the additional candidate edge $e$. Because task reachability is the maximum reliability over all available paths,
\[
    R_{\Gamma_b+e}(x)
    \geq
    R_{\Gamma_b}(x)
\]
for every $x\in\mathcal{X}$.

For each $x\in\mathcal{T}_e$, the two reachability values differ. Reachability monotonicity and the range $R_\Gamma(x)\in[0,1]$ therefore give
\[
    0
    <
    R_{\Gamma_b+e}(x)-R_{\Gamma_b}(x)
    \leq
    1.
\]
Tasks outside $\mathcal{T}_e$ have zero reachability change. Definition~\ref{def:appendix_connectivity_gain} therefore gives
\[
\begin{aligned}
    \Delta(e)
    &=
    \sum_{x\in\mathcal{T}_e}
    \rho(x)
    \left[
        R_{\Gamma_b+e}(x)
        -
        R_{\Gamma_b}(x)
    \right]\\
    &\leq
    \sum_{x\in\mathcal{T}_e}\rho(x)\\
    &=
    w'_\rho(e).
\end{aligned}
\]

Because $\mathcal{X}=\operatorname{supp}(\rho)$, every task in $\mathcal{X}$ has positive probability. Equality therefore holds exactly when
\[
    R_{\Gamma_b+e}(x)-R_{\Gamma_b}(x)=1
\]
for every $x\in\mathcal{T}_e$.
\end{proof}

\noindent\textbf{Intuition.}
Connectivity gain cannot count more than the full probability weight of a task whose reachability changes. It equals that full weight only when acquisition moves the task completely from reachability zero to reachability one.

\begin{aaltpropositionone}[Binary-reliability factorization]
For binary-reliability AALT, in which every acquired edge is treated as perfectly reliable,
\[
    \operatorname{RIG}_{\rho}(e)
    =
    \operatorname{Ent}(p_e)\,\Delta(e).
\]
\end{aaltpropositionone}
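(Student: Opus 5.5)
The plan is to combine Lemma~\ref{lem:rig_decomposition} with the equality case of Lemma~\ref{lem:connectivity_weight_bound}. Lemma~\ref{lem:rig_decomposition} holds for any assignment of edge reliabilities. It therefore already gives $\operatorname{RIG}_{\rho}(e)=\operatorname{Ent}(p_e)\,w'_\rho(e)$ in the binary-reliability setting. What remains is to show $w'_\rho(e)=\Delta(e)$ when every reliability lies in $\{0,1\}$ and the candidate edge receives $r_0=1$.

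First, I will establish that under binary reliability every task reachability $R_\Gamma(x)$ lies in $\{0,1\}$. Each path reliability is a product of factors in $\{0,1\}$, so it lies in $\{0,1\}$. The maximum over a finite set of paths, or the default value $0$ when no path exists, then also lies in $\{0,1\}$. This holds both for $\Gamma_b$ and for $\Gamma_b+e$. For the latter I use that the added candidate edge receives the initial reliability $r_0$, which the binary version fixes at $1$.

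Next, take any $x\in\mathcal{T}_e$. By the monotonicity argument in the proof of Lemma~\ref{lem:connectivity_weight_bound}, $R_{\Gamma_b+e}(x)\ge R_{\Gamma_b}(x)$. By definition of $\mathcal{T}_e$ the two values differ, and by the previous step both lie in $\{0,1\}$. Hence $R_{\Gamma_b}(x)=0$ and $R_{\Gamma_b+e}(x)=1$, so the difference is exactly $1$. This is the equality condition of Lemma~\ref{lem:connectivity_weight_bound}, which gives $\Delta(e)=w'_\rho(e)$. Substituting into Lemma~\ref{lem:rig_decomposition} yields $\operatorname{RIG}_{\rho}(e)=\operatorname{Ent}(p_e)\,\Delta(e)$.

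The only delicate point is what "binary reliability" covers. Existing edges may have $r_{e'}\in\{0,1\}$, so some existing edges may carry reliability $0$; that is harmless for the $\{0,1\}$-valuedness argument. The key requirement is that the new edge enters with $r_0=1$, since $r_0=0$ would make $\mathcal{T}_e$ empty and both sides zero. That case is still consistent, so the identity holds either way. I would state this explicitly rather than rely on it implicitly. No other obstacle is expected.
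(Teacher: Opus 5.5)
Your proposal is correct and follows the paper's proof essentially step for step: you show reachability is $\{0,1\}$-valued, use monotonicity to force every task in $\mathcal{T}_e$ from $0$ to $1$, invoke the equality case of Lemma A.2 to get $\Delta(e)=w'_\rho(e)$, and substitute into Lemma A.1. Your remark on zero-reliability edges (including $r_0=0$, where $\mathcal{T}_e=\emptyset$ and both sides vanish) is a harmless and slightly more careful treatment than the paper's assumption that every available edge has reliability one.
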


\begin{proof}[Proof of Proposition~\ref{prop:main_binary_factorization}]
Under binary reliability, every available edge has reliability one. Task reachability therefore satisfies
\[
    R_\Gamma(x)\in\{0,1\}
\]
for every topology $\Gamma$ and task $x$.

If $x\in\mathcal{T}_e$, acquiring $e$ changes the reachability of $x$. Because reachability is binary and cannot decrease, this change must be
\[
    R_{\Gamma_b}(x)=0,
    \qquad
    R_{\Gamma_b+e}(x)=1.
\]
Every task in $\mathcal{T}_e$ therefore has reachability improvement equal to one. Lemma~\ref{lem:connectivity_weight_bound} gives
\[
    \Delta(e)=w'_\rho(e).
\]
Substituting this equality into Lemma~\ref{lem:rig_decomposition} gives
\[
\begin{aligned}
    \operatorname{RIG}_{\rho}(e)
    &=
    \operatorname{Ent}(p_e)\,w'_\rho(e)\\
    &=
    \operatorname{Ent}(p_e)\,\Delta(e).
\end{aligned}
\]
\end{proof}

\noindent\textbf{Intuition.}
Under binary reliability, an acquired bridge either creates a complete route for a task or does not change that task. Reachability information gain therefore separates exactly into uncertainty about successful acquisition and the total task-solving capability created when acquisition succeeds.

\begin{aaltpropositiontwo}[Soft-reliability bound]
For soft-reliability AALT,
\[
    \operatorname{Ent}(p_e)\,\Delta(e)
    \leq
    \operatorname{RIG}_{\rho}(e).
\]
The bound is exact whenever every task whose reachability changes moves from zero to one.
\end{aaltpropositiontwo}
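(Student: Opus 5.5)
The plan is to combine the two auxiliary results already established, Lemma~\ref{lem:rig_decomposition} and Lemma~\ref{lem:connectivity_weight_bound}, both of which hold for any assignment of edge reliabilities. Soft reliabilities are therefore a special case. First I invoke Lemma~\ref{lem:rig_decomposition} to rewrite the right-hand side as $\operatorname{RIG}_{\rho}(e)=\operatorname{Ent}(p_e)\,w'_\rho(e)$. This reduces the claim to comparing $\operatorname{Ent}(p_e)\,\Delta(e)$ with $\operatorname{Ent}(p_e)\,w'_\rho(e)$.

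Next I invoke Lemma~\ref{lem:connectivity_weight_bound}, which gives $\Delta(e)\leq w'_\rho(e)$. Binary entropy is nonnegative on $[0,1]$ (with the convention $0\log 0=0$), so multiplying both sides by $\operatorname{Ent}(p_e)\geq 0$ preserves the inequality. This yields
\[
\operatorname{Ent}(p_e)\,\Delta(e)\leq \operatorname{Ent}(p_e)\,w'_\rho(e)=\operatorname{RIG}_{\rho}(e).
\]

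For the exactness clause, suppose every $x\in\mathcal{T}_e$ satisfies $R_{\Gamma_b}(x)=0$ and $R_{\Gamma_b+e}(x)=1$. Then each reachability change equals one, and the equality case of Lemma~\ref{lem:connectivity_weight_bound} gives $\Delta(e)=w'_\rho(e)$. Substituting into the chain above turns the inequality into an equality.

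I would also note two points for completeness. Equality can hold in degenerate cases even when the stated condition fails, namely when $\operatorname{Ent}(p_e)=0$, that is, when $p_e\in\{0,1\}$. This is consistent with the statement, which only claims the condition is sufficient. The only step needing any care is the nonnegativity of $\operatorname{Ent}(p_e)$, and it is immediate: each term $-p\log p$ is nonnegative on $[0,1]$. No step is a genuine obstacle, since the substantive work lives in the two lemmas.
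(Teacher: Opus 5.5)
Your proposal is correct and follows the paper's proof exactly: rewrite $\operatorname{RIG}_{\rho}(e)=\operatorname{Ent}(p_e)\,w'_\rho(e)$ via Lemma A.1, multiply the bound $\Delta(e)\leq w'_\rho(e)$ from Lemma A.2 by the nonnegative $\operatorname{Ent}(p_e)$, and obtain exactness from Lemma A.2's equality case. Your side note on the degenerate case $p_e\in\{0,1\}$ also agrees with the paper's Remark A.2.
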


\begin{proof}[Proof of Proposition~\ref{prop:main_soft_bound}]
Lemma~\ref{lem:connectivity_weight_bound} gives
\[
    \Delta(e)
    \leq
    w'_\rho(e).
\]
Because $\operatorname{Ent}(p_e)\geq 0$, multiplying both sides by $\operatorname{Ent}(p_e)$ preserves the inequality:
\[
    \operatorname{Ent}(p_e)\,\Delta(e)
    \leq
    \operatorname{Ent}(p_e)\,w'_\rho(e).
\]
Lemma~\ref{lem:rig_decomposition} gives
\[
    \operatorname{Ent}(p_e)\,w'_\rho(e)
    =
    \operatorname{RIG}_{\rho}(e).
\]
Combining these relations yields
\[
    \operatorname{Ent}(p_e)\,\Delta(e)
    \leq
    \operatorname{RIG}_{\rho}(e).
\]

If every task whose reachability changes moves from zero to one, then every $x\in\mathcal{T}_e$ has reachability improvement equal to one. Lemma~\ref{lem:connectivity_weight_bound} then holds with equality, which gives
\[
    \operatorname{Ent}(p_e)\,\Delta(e)
    =
    \operatorname{RIG}_{\rho}(e).
\]
\end{proof}

\noindent\textbf{Intuition.}
With soft reliability, acquiring a bridge may improve a task without making its route perfectly reliable. Reachability information gain still counts the task's full probability weight because the query outcome changes its reachability. Connectivity gain discounts the task according to the size of the improvement. Its entropy-scaled value is therefore a lower bound.

\begin{aaltremarkAtwo}[Equality conditions and scope]
\manualtheoremlabel{A.2}{rem:equality_scope}
When $0<p_e<1$, the soft-reliability bound is exact if and only if every task in $\mathcal{T}_e$ moves from reachability zero to one. When $p_e\in\{0,1\}$, the acquisition outcome has zero entropy, so both sides of the bound are zero regardless of the reachability improvements.

The factorization and bound are pointwise relationships for each candidate bridge. They do not imply that connectivity gain and reachability information gain rank all candidates identically when candidates have different acquisition probabilities. AALT optimizes connectivity gain because it directly measures the capability created by successful acquisition and can be evaluated from the learned topology. Learning that a bridge is infeasible resolves uncertainty, but it does not add an executable behavior or increase any task's reachability.
\end{aaltremarkAtwo}
\section{Robotic Retrieval Environment}
\label{sec:robotic_environment}

This section describes the retrieval environment used in our experiment. Code for this environment will be released on publication of this work.

\subsection{Physical Setup}

The environment models a robotic servicing cell containing a UR5e arm with a Robotiq 2F-85 parallel-jaw gripper. The robot is mounted in front of an open cabinet containing 15 color-coded cylindrical canisters. The robot pedestal is centered relative to the cabinet, and the cabinet's open face begins \(0.28\)~m in front of the center of the pedestal.

The cabinet is \(1.40\)~m wide, \(0.70\)~m deep, and \(0.545\)~m high. Its front is open, while its floor, back, sides, and roof are closed. The cabinet floor contains a \(4\times7\) grid of discrete canister locations. Columns \(A\) through \(G\) run from left to right, and rows \(1\) through \(4\) run from the front of the cabinet toward the back.

The centers of the first and last columns are each \(0.14\)~m from the corresponding side wall. Adjacent columns are approximately \(0.187\)~m apart. The centers of the first and last rows are each \(0.14\)~m from the front and back of the cabinet. Adjacent rows are \(0.14\)~m apart.

Each canister is a vertical cylinder with radius \(0.035\)~m, height \(0.16\)~m, and mass \(0.10\)~kg. Thirteen canisters have the fixed starting positions shown in Table~\ref{tab:canister_planogram}. The black and gray canisters serve as movable access guards and vary between starting layouts.

\begin{table}[t]
\centering
\small
\begin{tabular}{c|ccccccc}
\hline
Row & \(A\) & \(B\) & \(C\) & \(D\) & \(E\) & \(F\) & \(G\) \\
\hline
1 & -- & guard & -- & guard & -- & guard & -- \\
2 & -- & red & orange & teal & brown & pink & -- \\
3 & -- & yellow & lime & cyan & purple & green & -- \\
4 & -- & -- & blue & white & magenta & -- & -- \\
\hline
\end{tabular}
\caption{Canister planogram. Two of the three indicated guard locations are occupied in each starting layout.}
\label{tab:canister_planogram}
\end{table}

Three retrieval zones, denoted \(R1\), \(R2\), and \(R3\), are placed immediately outside the open front of the cabinet. Their centers are \(0.10\)~m in front of the cabinet face and \(0.15\)~m apart. Each retrieval platform is \(0.14\times0.14\)~m. The zones are visually distinguished by one, two, and three markers, respectively.

\subsection{Starting Layouts}

The black and gray access guards occupy two of the three front-row locations \(B1\), \(D1\), and \(F1\). The remaining location provides the clearest access corridor to the left, center, or right portion of the cabinet. Each access configuration has two variants that exchange the identities of the guards.

\begin{table}[t]
\centering
\small
\begin{tabular}{lcc}
\hline
Starting layout & Black guard & Gray guard \\
\hline
\texttt{LEFT\_A}   & \(D1\) & \(F1\) \\
\texttt{LEFT\_B}   & \(F1\) & \(D1\) \\
\texttt{CENTER\_A} & \(B1\) & \(F1\) \\
\texttt{CENTER\_B} & \(F1\) & \(B1\) \\
\texttt{RIGHT\_A}  & \(B1\) & \(D1\) \\
\texttt{RIGHT\_B}  & \(D1\) & \(B1\) \\
\hline
\end{tabular}
\caption{The six starting layouts. All non-guard canisters retain the positions in Table~\ref{tab:canister_planogram}.}
\label{tab:start_layouts}
\end{table}

The \(A\) and \(B\) layouts represent the same general access configuration but differ in which guard occupies each blocking position. Changing between access configurations requires rearranging one or both guards.

\subsection{Work Orders}

Each task specifies an ordered three-canister work order. The first requested canister must be delivered to \(R1\), the second to \(R2\), and the third to \(R3\). There are three work-order families: MOTOR, SENSOR, and SERVICE. Each family contains four work orders formed from two possible first canisters and two possible second canisters. Every work order ends with the white canister.

\begin{table}[t]
\centering
\small
\begin{tabular}{llll}
\hline
Work order & \(R1\) & \(R2\) & \(R3\) \\
\hline
\texttt{MOTOR\_00}   & red    & yellow  & white \\
\texttt{MOTOR\_01}   & red    & lime    & white \\
\texttt{MOTOR\_10}   & orange & yellow  & white \\
\texttt{MOTOR\_11}   & orange & lime    & white \\
\texttt{SENSOR\_00}  & teal   & blue    & white \\
\texttt{SENSOR\_01}  & teal   & purple  & white \\
\texttt{SENSOR\_10}  & cyan   & blue    & white \\
\texttt{SENSOR\_11}  & cyan   & purple  & white \\
\texttt{SERVICE\_00} & pink   & green   & white \\
\texttt{SERVICE\_01} & pink   & magenta & white \\
\texttt{SERVICE\_10} & brown  & green   & white \\
\texttt{SERVICE\_11} & brown  & magenta & white \\
\hline
\end{tabular}
\caption{The 12 ordered work orders. Each represents one goal.}
\label{tab:work_orders}
\end{table}

The complete task set is the Cartesian product of the six starting layouts and 12 work orders, producing \(6\times12=72\) start--goal tasks.

\subsection{Actions and Motion Constraints}

The environment uses discrete pick-and-place actions. Each action selects one of the 15 canisters and one destination. A destination may be any of the 28 cabinet cells or one of the three retrieval zones. The complete action space therefore contains $15(28+3)=465$ actions.

A canister may be moved to any unoccupied cabinet cell if both its pick and placement paths are clear. This includes the black and gray guards as well as the canisters appearing in work orders. Cabinet rearrangement may therefore be used to clear an access corridor before retrieving a requested canister.

Each pick or placement is evaluated using a straight access corridor from a centered position \(0.13\)~m in front of the cabinet to the selected source or destination. The corridor has a total width of \(0.14\)~m, corresponding to four canister radii. A stationary canister blocks the path when its center lies within \(0.105\)~m of the corridor centerline. Both the path to the source and the path to the destination must be clear. The corridor must also remain within the cabinet floor, roof, side walls, and back wall.

The experiments use abstract geometrically validated pick-and-place execution. If the source, destination, and access corridors are valid, the selected canister is transferred directly to its destination. Continuous arm trajectories, grasp uncertainty, and contact dynamics are not included in the experimental outcome.

An action is invalid if the selected canister has already been retrieved, the destination is occupied, the source and destination are identical, either access corridor is blocked, or the action violates the ordered-retrieval rules. An invalid action does not change the arrangement but consumes one action from the episode horizon.

During evaluation, actions that are invalid under the current occupancy, retrieval-order, and corridor constraints are masked before action selection. This validity mask is computed from the simulator state and is not included in the visual observation.

\subsection{Ordered Retrieval Rules}

For a work order \(g=(g_1,g_2,g_3)\), the required retrieval sequence is $g_1\rightarrow R1, g_2\rightarrow R2, g_3\rightarrow R3.$ The agent may perform any number of valid cabinet rearrangements between these retrieval actions. However, it may not retrieve a later canister before completing the preceding retrieval. For example, \(g_2\) cannot be placed in \(R2\) until \(g_1\) has been placed in \(R1\). A canister not contained in the work order cannot be placed in a retrieval zone.

Retrieval is permanent. Once a canister is placed in \(R1\), \(R2\), or \(R3\), it cannot be moved again during that episode. The task succeeds when all three requested canisters have been placed in their assigned retrieval zones in the correct order. An episode is considered a failure if this has not been completed within 30 actions.

\subsection{Observations}

The agent receives a top-down RGB image of the complete cabinet and retrieval area after every action. Images are rendered at \(320\times240\) pixels and resized to \(128\times128\) before being supplied to the learned models.

The observation shows the physical canister arrangement but does not display the current work order. The ordered three-color goal is supplied separately.

\subsection{Initial Demonstration Coverage}

The initial dataset contains 24 successful demonstrations comprising 196 actions. It includes the four MOTOR work orders from both LEFT layouts, the four SENSOR work orders from both CENTER layouts, and the four SERVICE work orders from both RIGHT layouts: $2(4)+2(4)+2(4)=24.$ Thus, every starting layout and every work order appears in the initial dataset, but only in its matched access region. The 48 cross-family combinations are omitted. In particular, no initial demonstration shows a MOTOR work order from a CENTER or RIGHT layout, a SENSOR work order from a LEFT or RIGHT layout, or a SERVICE work order from a LEFT or CENTER layout. The family and access-region groupings are used only to construct the task distribution and are not provided to the learner.
\section{Baselines}
\label{sec:baselines}

We compare AALT with Active Multi-task Fine-tuning (AMF) \cite{bagatella2025amf}. AMF considers a pre-trained multi-task policy and sequentially selects complete tasks for additional expert demonstration. Recall that a task is \(x=(s_0,g)\in\mathcal{S}_0\times\mathcal{G}\), drawn from the target distribution \(\rho\), and let \(\mathcal{D}_b\) denote the demonstration dataset available after acquisition round \(b\). Let \(p^\star(\tau\mid x)\) denote the expert trajectory distribution for task \(x\), and let \(\Pi\) denote AMF's uncertain policy model of the expert. For a candidate task \(x'\), let \(\tau'\sim p^\star(\cdot\mid x')\) denote the expert demonstration that would be obtained by querying \(x'\). AMF selects
\[
\arg\max_{x'\in\mathcal{S}_0\times\mathcal{G}}
\mathbb{E}_{\substack{
x\sim\rho\\
\tau\sim p^\star(\cdot\mid x)
}}
\left[
\sum_{t=0}^{|\tau|-1}
I\!\left(
\Pi(s_t,x);
\tau'
\mid
\mathcal{D}_b
\right)
\right],
\]
where \(I(U;V\mid W)\) denotes conditional mutual information. The inner term measures how much the candidate demonstration \(\tau'\) is expected to reduce uncertainty about the expert's action at state \(s_t\) for target task \(x\). Averaging over \(x\sim\rho\) and the corresponding expert trajectories values information that transfers across the complete target task distribution. Equivalently, AMF selects the task expected to minimize the remaining posterior entropy of the policy along target-task trajectories. This differs from AALT, which values a bridge according to the additional start--goal connectivity it creates.

Computing the AMF objective exactly would require knowing which states are likely to be visited while solving each target task and how observing every possible candidate demonstration would change the learner's policy posterior. These quantities are not directly available. Following the practical neural-network formulation of AMF, we approximate the state occupancies using the expert trajectories already stored in \(\mathcal{D}_b\). For each candidate context, importance weights give greater weight to stored trajectories that the current policy considers likely under that context and less weight to trajectories that it considers unlikely. A context is a complete task in AMF-Full and a grounded bridge in AMF-Bridge. We represent policy uncertainty using loss-gradient embeddings. At each sampled state, we compute the exact cross-entropy gradient of the first-action output head, including its bias. Examples with similar gradients would produce similar policy updates, so their gradient inner products provide a measure of how much information can transfer between them. These inner products define a kernel that allows the policy to be approximated as a Gaussian process. AMF then estimates the posterior variance that would remain if each candidate were added as a demonstration, using the importance-weighted stored trajectories as possible demonstration outcomes. It selects the candidate with the lowest expected posterior variance, or equivalently the largest expected variance reduction.

\textbf{AMF-Full} follows the original AMF query format. Its candidate set contains all 72 complete start--goal tasks, and each accepted query returns a full expert demonstration from the selected starting layout to the selected work order. The target distribution in the AMF objective is uniform over the 72 tasks. The policy receives the RGB observation and a 51-dimensional factorized task descriptor containing one of six starting layouts and one of 15 colors for each of the three ordered goal positions. This factorization allows the policy to share information across previously unseen start--goal combinations without assigning an unrelated learned identifier to each task. AMF-Full may query the same task more than once, as permitted by the original AMF formulation. 

\textbf{AMF-Bridge} controls for the difference between complete-task queries and short bridge queries. It uses the same learned topology, admissible grounded bridge set, hub matcher, global diffusion controller, expert, and execution procedure as AALT. Its candidates are therefore the same feasible missing hub-to-hub bridges considered by AALT. However, it ranks these candidates using AMF posterior-variance reduction rather than connectivity gain. Policy uncertainty is averaged uniformly over the observed and admissible hub-transfer contexts. Once a bridge has been successfully acquired, it becomes part of the topology and is removed from the missing-bridge candidate set. Thus, the comparison between AALT and AMF-Bridge changes the acquisition objective while holding the query granularity and controller fixed.

Both baselines use the same categorical diffusion configuration as AALT: a prediction horizon of eight actions, 12 denoising steps, one executed action before replanning, four observation-history frames, transformer width 192, four layers, four attention heads, and zero dropout. Inference uses deterministic sampling with temperature \(1.0\). Initial policy training uses 600 epochs with learning rate \(10^{-3}\). After each successful query, the policy is adapted for 100 epochs with learning rate \(2\times10^{-4}\), replaying all initial and acquired demonstrations. The baselines also use the same image augmentation, label smoothing, and valid-action mask as AALT. 

For the AMF approximation, we use posterior noise \(10^{-2}\), following the reference AMF setting. Up to four evenly spaced decision points are taken from each stored trajectory, preserving early, intermediate, and late behavior while limiting acquisition cost. The resulting output-head gradients are compressed to 2,048 dimensions. Trajectory likelihoods use temperature \(1.0\), and log importance weights are clipped at \(30\) for numerical stability. The action-validity mask is used during policy execution but not when computing AMF likelihoods, preventing symbolic validity information from entering the uncertainty estimate. Each baseline receives the same 24 initial demonstrations. We omit AMF's adaptive-prior mechanism because all initial demonstrations remain available for full replay, allowing forgetting to be handled identically across methods.
\section{Compute Resources}
\label{sec:compute}

All models were trained and evaluated on a single workstation with an Intel Core Ultra 9 275HX CPU, 32~GB of system memory, and an NVIDIA GeForce RTX 5080 Laptop GPU with 16~GB of video memory. Training used one GPU without distributed computation.

The complete model contains \(6{,}325{,}123\) trainable parameters. Table~\ref{tab:model_parameters} gives the parameter count for each major component. 

\begin{table}[h]
\centering
\small
\begin{tabular}{lr}
\hline
Component & Parameters \\
\hline
Latent dynamics model & \(3{,}469{,}233\) \\
Hub matcher & \(10{,}369\) \\
Global diffusion policy & \(2{,}845{,}521\) \\
\hline
Total & \(6{,}325{,}123\) \\
\hline
\end{tabular}
\caption{Trainable parameter counts for the model components used in the reported experiments.}
\label{tab:model_parameters}
\end{table}

Initial diffusion-policy training used 600 epochs over 64 demonstrated behavior segments, corresponding to 38,400 optimization steps. After each successful expert query, the shared policy was adapted for 100 epochs using the complete replay dataset. The three adaptation rounds required approximately \(357\), \(366\), and \(356\) seconds, respectively, for a total adaptation time of approximately 18 minutes. These rounds performed 6,500, 6,600, and 6,700 optimization steps as the dataset grew, giving 19,800 post-query optimization steps in total.
\section{Limitations, Scope \& Future Work}
\label{sec:limitations}

AALT is designed for settings in which task failures primarily result from missing connections between otherwise reusable behaviors. It is most appropriate when tasks are defined by start states and recognizable goals, demonstrations share meaningful intermediate states, and reaching the same hub makes similar downstream behaviors available. When tasks share little behavior, or when their main difficulty is learning individual skills rather than connecting them, policy-centric active imitation learning may be more appropriate. A hybrid method could address a broader range of settings by weighting both connectivity gain and information gain about the expert policy, but we leave this combination to future work.

AALT can incorporate new start and goal states by adding them as hubs together with incident bridge demonstrations. However, it does not presently propose new internal hub states. Its available compositions therefore depend on the intermediate structure represented by the learned topology, as extracted from the initial dataset. The quality of this structure also depends on the encoder, clustering procedure, and runtime matcher. Incorrectly merging states with different future possibilities can create routes that are not executable, while separating equivalent states can hide useful compositions. The action-conditioned representation and learned matcher are intended to reduce these errors.

AALT selects bridges greedily according to their immediate connectivity gain. This provides a direct and computationally manageable acquisition rule, but it may miss complementary bridge sets. For example, two bridges may jointly connect many starts and goals even though neither completes a new route by itself. Both may then receive little individual gain. This limitation could be addressed through limited-horizon lookahead, selection over small bridge sets, or a planning procedure that values partially completed connections. The present method also assigns a common initial reliability to new edges. Candidate-specific feasibility and reliability models could reduce the value assigned to bridges that are difficult for the expert to provide or difficult for the policy to execute.

Finally, connectivity is not always the principal limitation on task success. A topology may contain a route for a task even when one of its behaviors remains poorly learned. In this case, a policy-centric method can request demonstrations in regions where the policy is uncertain, while AALT may find no valuable missing bridge. A combined objective could choose between adding a missing bridge and refining an unreliable existing behavior based on their expected effects on task success.

\end{document}